\documentclass[journal]{./IEEEtran}

\usepackage{times}
\usepackage{soul}
\usepackage{url}
\usepackage[hidelinks]{hyperref}
\usepackage[utf8]{inputenc}
\usepackage[small]{caption}
\usepackage{graphicx}
\usepackage{amsmath}
\usepackage{amsthm}
\usepackage{booktabs}
\usepackage{multirow}
\usepackage{subfigure}
\usepackage{setspace}
\usepackage{color}
\usepackage{subfloat}
\urlstyle{same}
\usepackage{algorithmic}
\usepackage{makecell}
\usepackage[linesnumbered,ruled]{algorithm2e}
\usepackage{amsthm}
\newcommand*{\bs}[1]{\boldsymbol{#1}}
\usepackage{mathtools}
\usepackage{cite}
\usepackage[normalem]{ulem}
\useunder{\uline}{\ul}{}

\newtheorem{thm}{Theorem}[section]

\newtheorem{prop}[thm]{Proposition}

\captionsetup[figure]{font=small}

\usepackage{latexsym}
\usepackage{amssymb}

\hyphenation{op-tical net-works semi-conduc-tor}

\begin{document}
%
\title{Robust Group Anomaly Detection for Quasi-Periodic Network Time Series}
%
%
%

\author{Kai Yang, Shaoyu Dou, Pan Luo, Xin Wang, H. Vincent Poor
\thanks{Kai Yang, Shaoyu Dou, Pan Luo are with the Department of Computer Science and Technology, Tongji University, Shanghai, China. Xin Wang is with School of Information Science and Technology, Fudan University, Shanghai, China. H. Vincent Poor is with the Department of Electrical Engineering, Princeton Universty, USA. 
(Corresponding author: Kai Yang).}
}

%
%

\markboth{Journal of \LaTeX\ Class Files,~Vol.~14, No.~8, August~2015}%
{Shell \MakeLowercase{\textit{et al.}}: Bare Demo of IEEEtran.cls for IEEE Journals}
%



\maketitle

\begin{abstract}
Many real-world multivariate time series are collected from a network of physical objects embedded with software, electronics, and sensors. The quasi-periodic signals generated by these objects often follow a similar repetitive and periodic pattern, but have variations in the period, and come in different lengths caused by timing (synchronization) errors. Given a multitude of such quasi-periodic time series, can we build machine learning models to identify those time series that behave differently from the majority of the observations? In addition, can the models help human experts to understand how the decision was made? We propose a sequence to Gaussian Mixture Model (seq2GMM) framework. The overarching goal of this framework is to identify unusual and interesting time series within a network time series database. We further develop a surrogate-based optimization algorithm that can efficiently train the seq2GMM model. Seq2GMM exhibits strong empirical performance on a plurality of public benchmark datasets, outperforming state-of-the-art anomaly detection techniques by a significant margin. We also theoretically analyze the convergence property of the proposed training algorithm and provide numerical results to substantiate our theoretical claims.
\end{abstract}

\begin{IEEEkeywords}
\textcolor{black}{Group anomaly detection, Timing errors, Gaussian Mixture Model}
\end{IEEEkeywords}

%
\IEEEpeerreviewmaketitle

\begin{spacing}{0.98}

\section{Introduction}
\label{section:1}

\IEEEPARstart{T}{ime}
series anomaly detection is an important problem for both fundamental signal processing and machine learning research and a variety of engineering applications, including health informatics \cite{ECG-nature-medicine}, sensor data management \cite{Basu2007}, and service performance monitoring \cite{Xu2018,yang2016deep}. Two basic approaches to anomaly detection are those based on statistical models and those based on machine learning methods applied directly to data.
In this paper, we are interested in the latter type of anomaly detection algorithms. Such learning-based anomaly detection methods detection can be broadly categorized into three groups, i.e., supervised, unsupervised, and semi-supervised learning techniques. Anomalies by their nature happen sporadically and remain difficult to detect. The lack of suitable training data makes the supervised learning approach \cite{Steinwart:2005:CFA:1046920.1058109} inapplicable in many cases. Instead, unsupervised machine learning approaches, such as K Nearest Neighbor (KNN), one-class support vector machine (OCSVM) \cite{Amer2013}, Local Outlier Factor (LOF) \cite{breunig2000lof}, Deep Autoencoder Gaussian Mixture Model (DAGMM) \cite{zong2018deep}, are capable of discovering unusual patterns within data without any training labels and thus are particularly suitable for a wide range of applications in which the anomaly labels are difficult to obtain.

We assume the quasi-periodic time series under investigation is obtained via sampling an underlying continuous signal. It can be loosely defined as a time series of recurring patterns with different lengths, a.k.a. rhythms or motifs \cite{motifDiscovery,BeatGAN}. A notable example is the electrocardiogram (ECG) signal. Irregular ECG patterns can be used to diagnose cardiac arrhythmia, which causes a tremendous number of deaths around the world every year.
Therefore, detecting unusual quasi-periodic time series with timing errors in a time series database is still an underappreciated topic. To the best of our knowledge, no existing works have addressed this particular problem so far.

As a common phenomenon in the sampling process, \emph{timing (synchronization) error} brings great challenges against analyzing the quasi-periodic time series \cite{jin2000digital}. For example, for IoT devices equipped with low-quality hardware, the clock may drift considerably between synchronization points which leads to timing errors. 
We consider the following two types of timing errors. Type-1 timing error refers to the instantaneous time shift of the signal caused by clock jitters \cite{wolaver1991phase}.
Type-2 timing errors manifest itself as signal deletion due to unreliable sampling elements \cite{yang2020decoding,mitzenmacher2009survey}. Fig. \ref{fig:timing_error} gives examples of these two types of timing errors.
Such timing errors are common in a multitude of applications as diverse as IoT data management and health informatics.

\begin{figure}[!tbp]
  \centering
    \centering
    \includegraphics[width=3in]{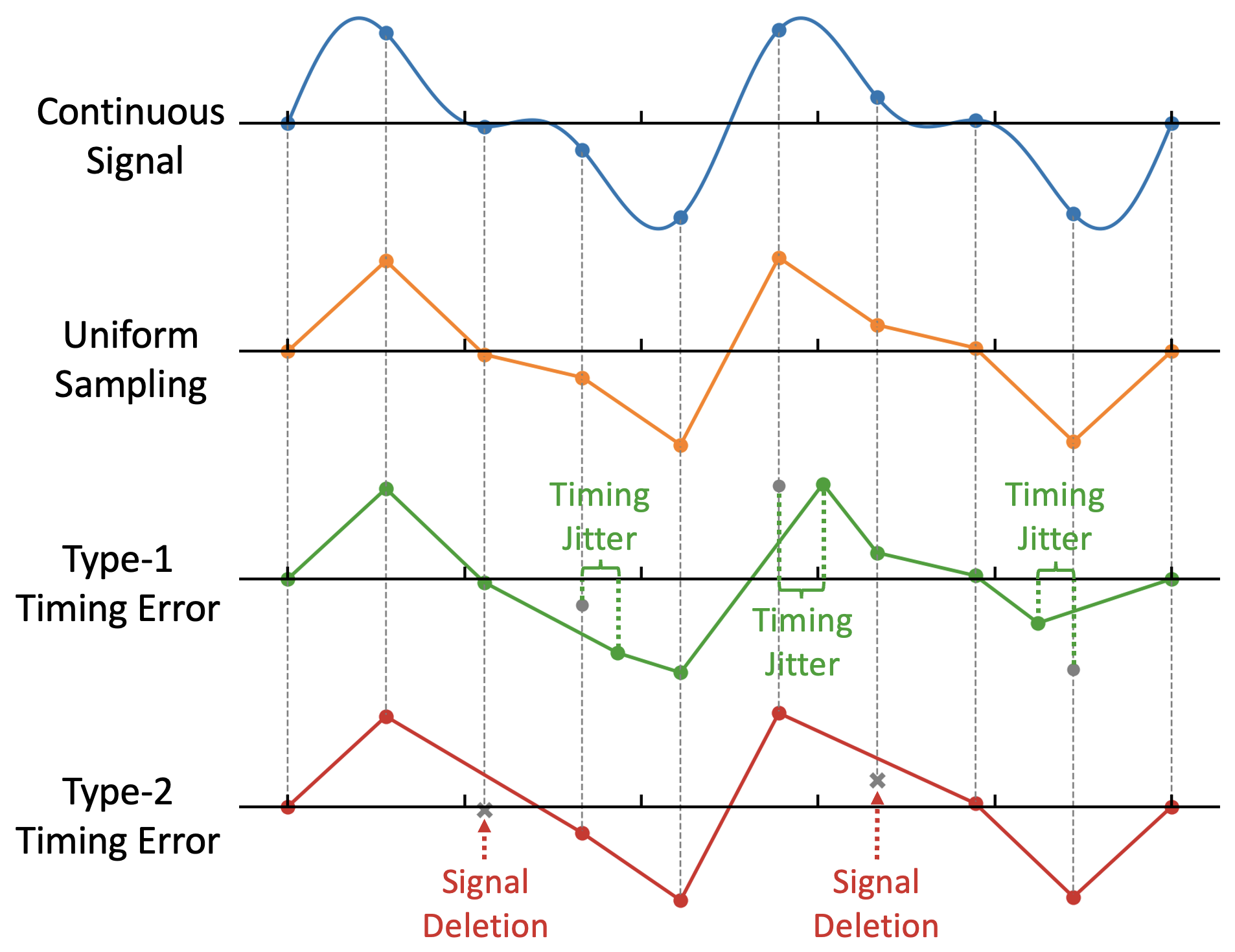}
  \centering
  \caption{Exemplary time series samples with Type-1 and Type-2 timing errors.}
  \label{fig:timing_error}
\end{figure}

The shapelet is the most discriminative small subsection of the shape between two classes of shapes. Thus the time series shapelet can be loosely defined as a subsequence which is the defining difference between two classes of time series. Time series shapelet was first proposed in \cite{ye2009time} for providing visual and interpretable outcomes in time series classification tasks. \cite{beggel2019time} provides the first application of shapelet in unsupervised time series anomaly detection by learning normal shapelets from the training set, which maximally represent the normal class. In contrast, the \emph{anomaly shapelets} can be viewed as those subsequences that cause samples to be deemed as anomalies. They can be considered as the root cause of anomalies, therefore localizing anomaly shapelets can help humans understand the detected anomalies.
As opposed to the traditional approaches, We learn shapelet in the latent space through a recurrent encoder-decoder architecture, therefore we call it \emph{recurrent shapelet}.

The group behavior anomaly detection problem that we aim to address is described as follows. Also, for brevity, we call quasi-periodic time series rhythmic time series hereafter.

\textbf{Problem 1 (Robust Group Anomaly Detection for Rhythmic Time Series)}:
Given a collection of $N$ univariate rhythmic time series $\{\bs{s}_1, \bs{s}_2, ... , \bs{s}_N\}$ with different lengths, the primary goal of seq2GMM is to 
1) identify anomalous time series that appear to be different from the normal pattern, 
2) pinpoint the anomaly subsequences, i.e., anomaly shapelets in $\bs{s}_i$ that can help users to investigate and understand the detected anomalous time series.

\subsection{Running Examples}

We briefly discuss a collection of application domains of the proposed framework and also provide a few illustrative
examples.

\begin{enumerate}

\item \textbf{IoT Data Management}: Sensor data captured from IoT devices plays an essential role in many real-life IoT applications such as cloud server performance monitoring \cite{Xu2018}, inventory management systems, and human body posture analysis. For example, the CMU motion capture database consists of motion capture records including walking, running, jumping. Our proposed framework can help identify anomalous behaviors of human posture or cloud service performance.

\item \textbf{Health Informatics}: Health informatics uses information technologies to process and analyze health and medical information of patients to improve the healthcare outcomes. A notable example in health informatics is real-time ECG signal analysis which aims to continuously monitor cardiovascular patients on a daily basis, identify abnormal behaviors, and warn the patients in case of cardiac arrhythmia.

\item \textbf{Shape Analysis}: Shape analysis refers to the automatic process of analyzing geometric shapes. For example, shape analysis allows detection of objects of similar shapes in a database. It finds a variety of applications in security applications, e.g, face recognition, and medical imaging, e.g, shape changes due to illness.

\end{enumerate}

Seq2GMM can be employed for face recognition, as shown in the following example. The facial contours can be mapped into a univariate time series through preprocessing, as shown in Fig. \ref{fig:facesucr}. In this example, various segments delineate the contours of different parts, including head, face, neck, etc. Seq2GMM aims to identify unseen faces in the database, which can be viewed as an anomaly. It also seeks to localize the anomaly shapelets to identify which part of the contour appears to be abnormal.

\begin{figure}[!tbp]
  \centering
    \centering
    \includegraphics[width=2.3in]{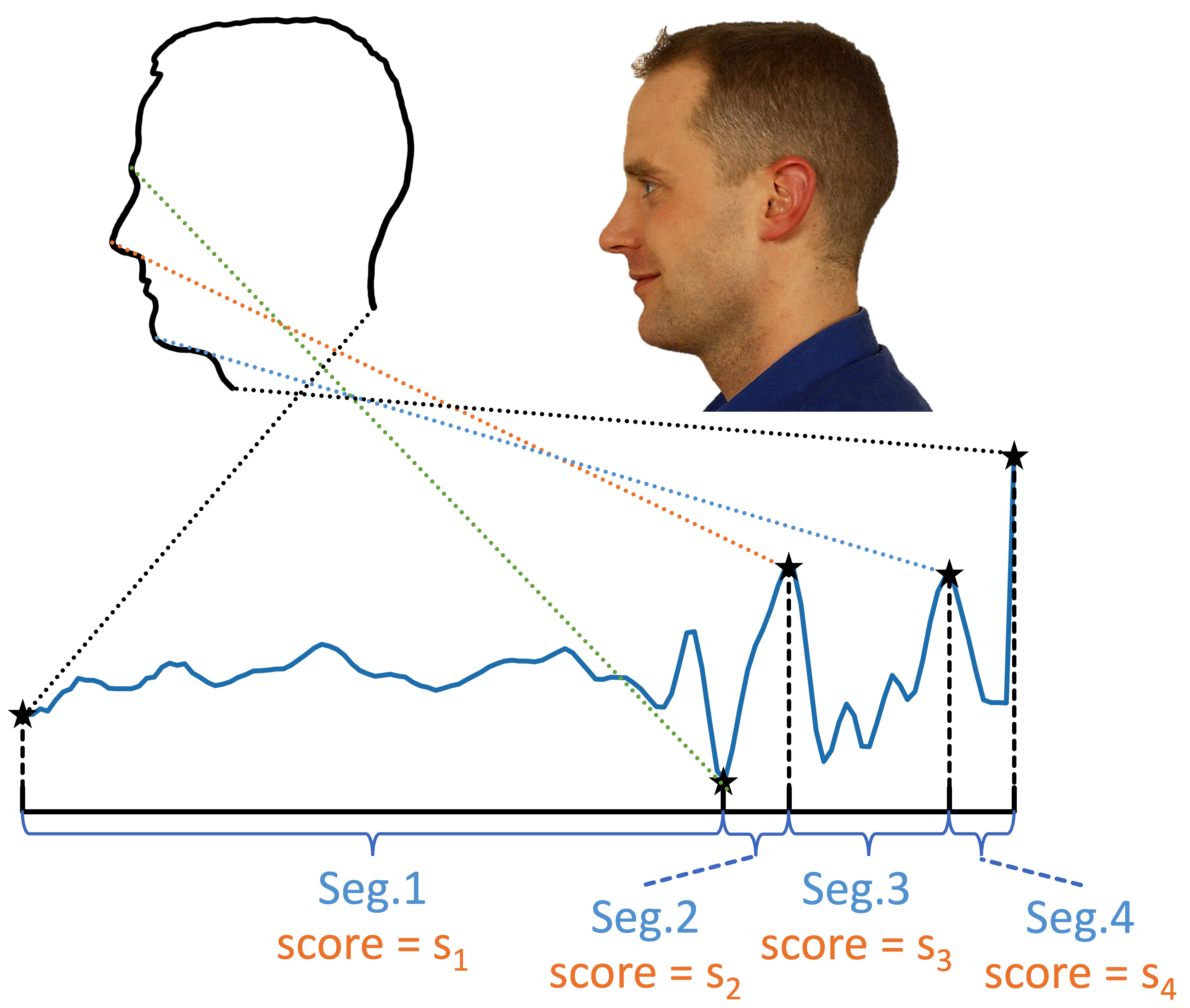}
  \centering  \caption{Example of seq2GMM used for shape analysis.}
  \label{fig:facesucr}
\end{figure}

Next, we show how an example in which seq2GMM can be employed for ECG signal analysis. Analyzing ECG signal and detecting anomalies is critical for heart disease diagnosis. There are a total of six types of ECG waves, including P wave, Q wave, R wave, S wave, T wave and U wave. Therefore, it is of great importance to not only detect unusual ECG signal, but also pinpoint the segment that appears to be abnormal. 
Fig. \ref{fig:ecg_ecample} shows a normal beat and an abnormal beat of the ECG V2 signal, where the abnormal beat exhibits significant anomalous pattern in the RS segment, which indicates a potential heart disease. Seq2GMM analyzes ECG signals by learning normal signals and identifying unusual signals that appear to be abnormal. It identifies anomalies by jointly mining the dynamics of each wave. In addition, seq2GMM outputs an anomaly score for each wave, which characterizes the level of abnormality a particular wave is associated with.

\begin{figure}[!tbp]
  \centering
    \centering
    \includegraphics[width=1.9in]{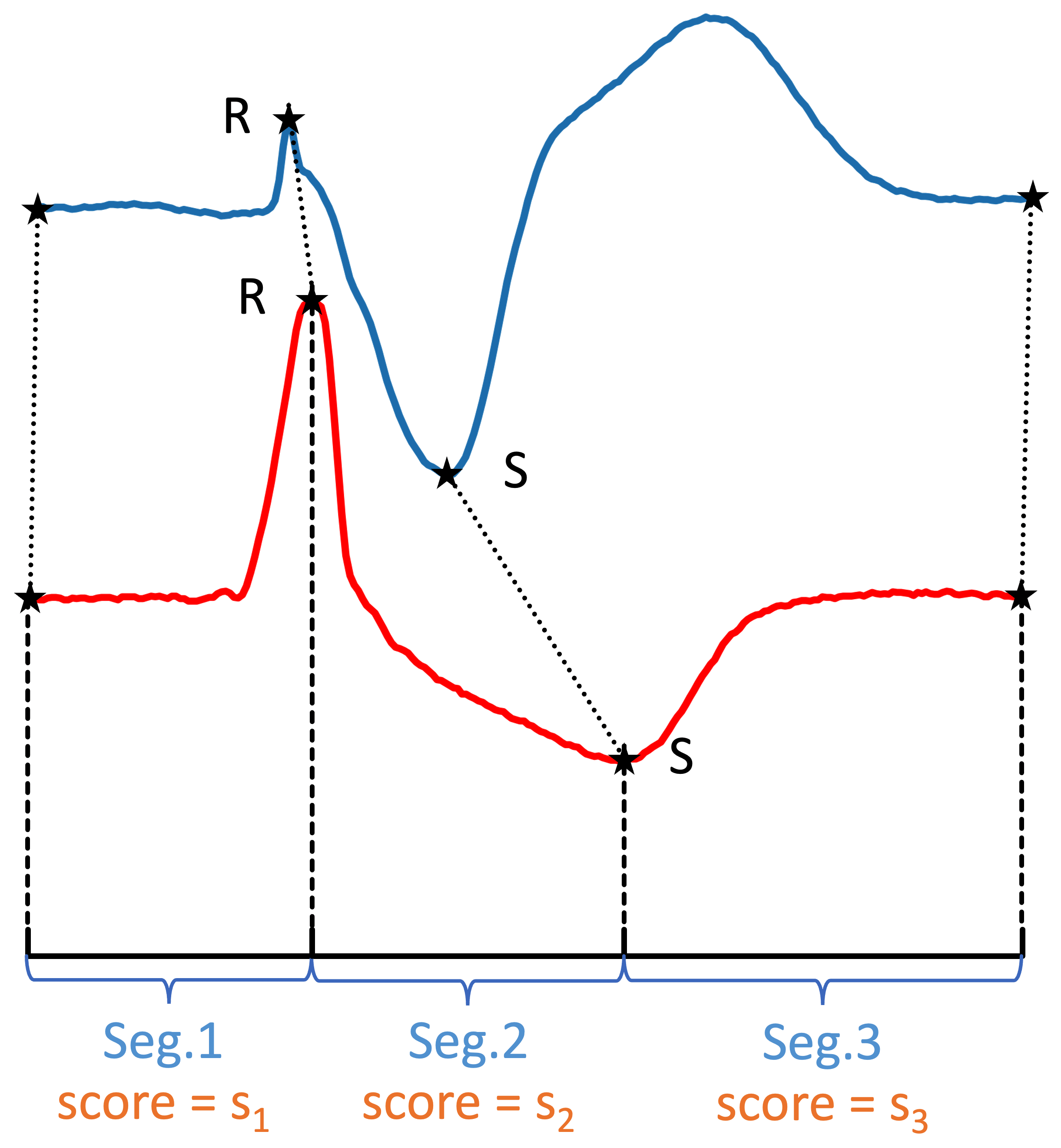}
  \centering  \caption{An example of a normal beat (blue line on the top) and an abnormal beat (red line on the bottom), according to the AAMI EC57 standard.}
  \label{fig:ecg_ecample}
\end{figure}

Notice that in aforementioned examples, it is difficult to guarantee that all collected sequences are of equal length. It remains a significant challenge to align them perfectly due to timing errors. These constraints motivate us to design robust anomaly detection algorithms that remain effective in the presence of timing errors.

\subsection{Contributions}

We design a deep learning framework a.k.a. seq2GMM that is capable of conducting clustering and unsupervised anomaly detection for rhythmic time series coming with timing errors. More concretely, we have made the following contributions.

\begin{enumerate}

\item \textbf{Learning without anomaly training samples}: Seq2GMM requires no training labels and shows strong empirical performance. It outperforms state-of-the-art algorithms by a large margin over a plurality of public datasets from real world scenarios.

\item \textbf{Rhythmic time series with timing errors}: While the recurring patterns of different rhythmic time series are in general similar to each other, it is difficult to align them perfectly due to the timing errors. Seq2GMM can effectively learn the similarities among a multitude of rhythmic time series with timing errors. In addition, we also provide data augmentation methods to further improve the robustness of the model.

\item \textbf{Visualization and localization}: As shown in Fig. \ref{fig:split2}, the anomalies are detected in the latent space which allows better visualization.
Furthermore, seq2GMM provides \emph{local interpretability} via identifying the recurrent anomaly shapelets within the time series. 
The recurrent anomaly shapetlet provides rich semantic information for users to understand the reason that such a time series is deemed as being anomalous.
\end{enumerate}

Moreover, we have developed a surrogate-based model training algorithm with convergence guarantee. The training algorithm can achieve a stable learning model with comparable performance even in the presence of up to $10\%$ anomalies in the training data.

The remainder of this paper is organized as follows. We begin this paper in Section \ref{section:2} via providing a brief review of the related work. In Section \ref{section:3}, we present the anomaly model in latent space. Next, in Section \ref{section:4}, sequence to GMM learning via deep attentive neural network, data augmentation and anomaly shapelet localization methods are developed. We also assess the performance of the proposed model via extensive experiments on real-world datasets in Section \ref{section:5}. We finally conclude this paper in Section \ref{section:6}.

\section{Related Work}\label{section:2}

Considerable progress has been made for supervised learning of time series anomaly detection. It can achieve high accuracy if sufficient labeled training data is provided \cite{ECG-nature-medicine,yang2018active}. However, anomaly training data is usually difficult to obtain in practice. And the resultant model obtained by a supervised approach might not be able to identify anomalous patterns that never appeared before. Therefore, an unsupervised learning approach is more desirable in practical applications when there is insufficient training data.

Unsupervised anomaly detection has received significant research interest over the past decade and there exists a large body of work \cite{chandola2009anomaly}. There is recently a growing interest in applying deep learning techniques to unsupervised anomaly detection. There exist two major approaches, i.e., prediction-based and classification-based methods for unsupervised anomaly detection. The prediction-based method analyzes time series data using deep learning model to make predictions. Deviations between the predicted and true values are deemed as anomaly scores \cite{hundman2018detecting,dou2019pc}. The latter determines whether a given time series is anomalous based on its latent space representation. Such an approach has received increasing attention because of its ability to identify anomalies from the latent space \cite{zong2018deep,ruff2018deep,erfani2016high,shen2020timeseries}. 

Learning representations from unlabeled time series data is a key yet very challenging task. The existing time series representation learning methods can be grouped into two categories according to whether they employ deep learning methods or not. Non-deep learning methods, such as spectral decomposition \cite{golub2013matrix,percival2000wavelet}, frequency domain analysis \cite{faloutsos1994fast,chan1999efficient}, and symbolic methods \cite{lin2003symbolic,schafer2012sfa}, are in general computationally more efficient than deep representation learning methods. However, non-deep learning approaches usually yield performance that is inferior to that of the deep representation learning methods.
While a variety of deep learning models \cite{paparrizos2019grail,yuan2019wave2vec,fortuin2018som} have been proposed specially for time series representation learning, deep autoencoders have been deemed as the dominant representation learning framework, because of its robustness against anomaly training samples as well as its low computational cost.
Along the line of research regarding the encoder/decoder structure, existing works include RNN-based \cite{ma2019learning,yao2017trajectory}, CNN-based \cite{franceschi2019unsupervised,BeatGAN}, and self-attention-based \cite{zerveas2021transformer,zhou2021informer}.
Such unsupervised anomaly detection methods that combine time series representations learned by deep autoencoders are gradually becoming mainstream.
\cite{zhang2017joint,zong2018deep} employ a deep autoencoder to obtain a low-dimensional representation in the latent space, and then utilize a Gaussian mixture model (GMM) to estimate the probability distributions and subsequently detect the outliers. Likewise, a generative adversarial network (GAN) has been combined with the deep encoder-decoder structure for detecting anomalies in rhythmic time series \cite{BeatGAN}. 

Besides detecting time series anomalies, identifying and localizing the most anomalous segments, i.e., anomaly shapelets within the time series are equally important. However, most classification-based methods only generate an anomaly score for a given time series and therefore cannot localize the anomaly shapelet, if it exists. 
A possible approach is to divide the entire time series into multiple segments and subsequently detect the anomalous segment. Specifically, the segmentation algorithm, such as IEP \cite{fink2007important}, PIP \cite{fu2006time} and PLSA \cite{fuchs2010online}, etc., divide the time series into multiple subsequences according to their temporal dynamics. Then machine learning algorithms are carried out on these subsequences for anomaly detection.
\cite{sivaraks2015robust,thuy2021efficient,pang2018intelligent} utilize KNN based on DTW or other distance metrics to identify and localize anomaly shapelets on each set of subsequences. \cite{yang2013trasmil} utilizes hierarchical dirichlet process HMM (HDP-HMM) to learn representations of subsequences, which are used to train supervised classification models to support anomaly shapelets localization.
In addition, temporal segmentation is also employed as data preprocessing or dimensionality reduction methods to support data mining on lengthy quasi-periodic time series \cite{liu2020anomaly}.
Different from existing methods, we propose in this paper a recurrent anomaly shapelet localization approach by leveraging the powerful representation capabilities of deep learning model.

While various efforts have been undertaken to discover temporal anomalies in time series \cite{1996:temporal:anomaly,hyndman2015large,chatfield2016analysis}, to the best of our knowledge, leveraging unsupervised deep learning techniques for the detection of anomalies in quasi-periodic time series while considering timing errors has not been systematically investigated before. We have summarized in Table \ref{tab:dataset} existing works in the literature. It is evident that only seq2GMM meets all the desired properties.

\begin{table}[!t]
  \centering
  \caption{Characteristics of existing related work}\label{tab:dataset}
  \scalebox{0.9}{
  \begin{tabular}{cccccc}
    \toprule
     & Localization & Timing Errors & Nonlinear \\
    \midrule
      DL-OCSVM &   &   & \checkmark \\
      AE &  &  & \checkmark \\
      DAGMM &     &   & \checkmark \\
      BeatGAN &  \checkmark  &  & \checkmark \\
      seq2GMM &  \checkmark  &   \checkmark & \checkmark\\
    \bottomrule
  \end{tabular}
    }
\end{table}

\begin{figure*}[t]
  \centering
  \includegraphics[height=2.0in]{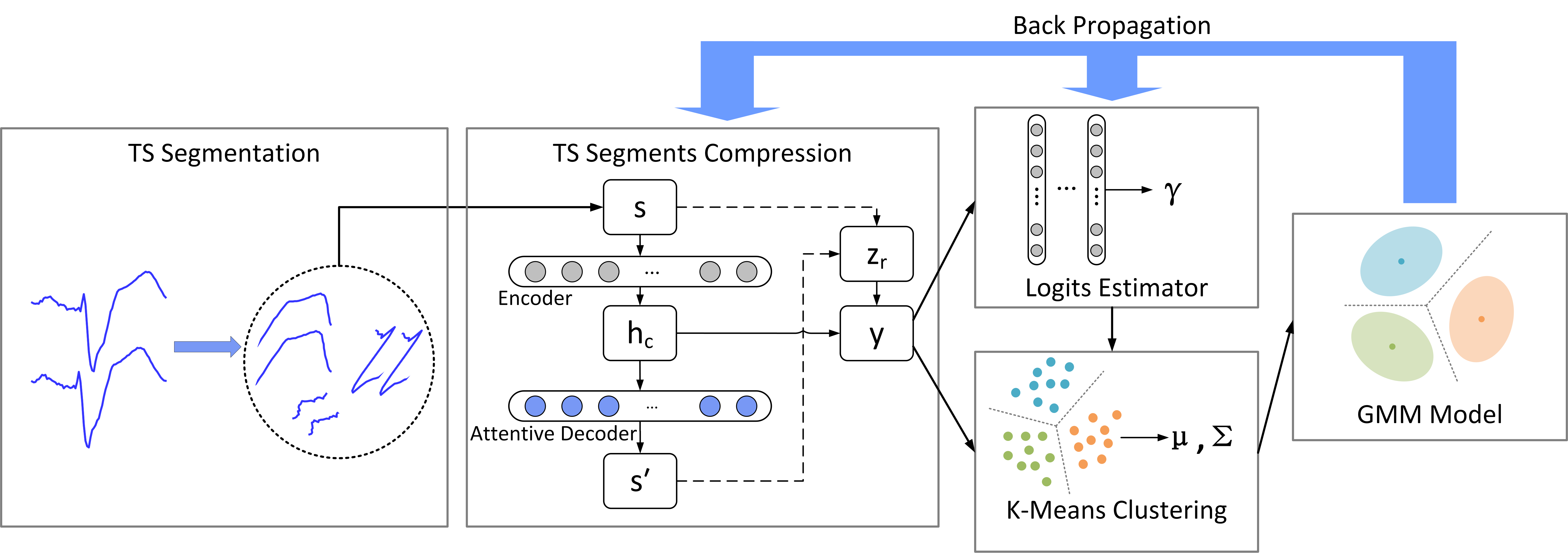}
  \caption{Graphical illustration of seq2GMM. seq2GMM is composed of three blocks,
  namely the time series segmentation, the temporal compression network,
  the estimation network with GMM.}
\label{fig:structure}
\end{figure*}

\section{Group Anomaly Model in Latent Space}\label{section:3}
In this paper, vectors are represented in lowercase boldface, scalars are in italics, and uppercase boldface is used for matrices. We assume there are a collection of $N$ sequences $\{\bs{s}_1, \bs{s}_2, ... , \bs{s}_N\}$, called a bundle \cite{Matsubara2014a}. Different from \cite{Matsubara2014a}, there may exist unknown timing shifts between these time series and they may be of different lengths. For a time series $\bs{s}_i$, it is divided into multiple temporal segments $\bs{s}_i=\{\bs{s}_{i1}, \bs{s}_{i2}, ... , \bs{s}_{iM}\}$. Each segment is compressed into a latent space through an encoder-decoder structure with attention mechanism, as discussed in the next section. We assume the latent space representation of a normal time segment $s$ is drawn from a null hypothesis, i.e., $\texttt{H}_0 \sim h_o(\bs{s})$. We assume $h_0(\bs{s})$ follows a Gaussian mixture model. GMM is employed because it is a powerful and flexible model that has been proved to be able to approximate any continuous distribution with a finite number of mixtures arbitrarily well \cite{GMM-book}. Thus the key is to efficiently learn a compact latent space representation from the training data. Please note that the proposed approach essentially consists of three steps: 
1) temporal segmentation through a piecewise linear regression model;
2) a temporal sequence compression network to extract the temporal dynamics from the time series;
3) a Gaussian Mixture Model in the latent space. 
The parameters of steps 2 and 3 need to be jointly optimized. It is also imperative to have the temporal segmentation module since it allows the machine learning model to effectively extract the temporal dynamics and emphasize on the most anomalous segment of each time series.

\section{Sequence to GMM Learning via Deep Attentive Neural Network}\label{section:4}

Given a time series $\bs{s}$ composed of multiple temporal segments $\bs{s}_i=\{\bs{s}_{i1}, \bs{s}_{i2}, ... , \bs{s}_{iM}\}$, seq2GMM aims to group all obtained temporal segments into different clusters according to the low-dimensional representation $\bs{y}$ of each segment $\bs{y}_i=\{\bs{y}_{i1}, \bs{y}_{i2}, ... ,\bs{y}_{iM}\}$ and associate each segment with an anomaly score $\{e_{i1}, e_{i2}, ... , e_{iM}\}$.

\subsection{Model Overview}
As shown in Fig. \ref{fig:structure}, seq2GMM consists of three modules, namely the temporal segmentation, temporal compression network, and GMM estimation module.

\subsection{Temporal Segmentation}
\label{sec:segment}

A time series can often be represented as a sequence of finite-length segments.
Here we employ a piecewise linear regression method to pre-process the time series under investigation and partition it into a sequence of segments. The primary goal of the proposed temporal segmentation method is elucidated in Fig. \ref{fig:split1}. In particular, this pre-processing step allows us to learn the temporal dynamics of \emph{each segment} instead of the entire time series. In doing so, the machine learning model can emphasize on and pinpoint the most anomalous segment of a time series.

For each time series, we split it into non-overlapping temporal segments \(\{\bs{s}_{i1}, \bs{s}_{i2}, ... , \bs{s}_{iM}\}\). Note that for each time series in the same data set, they share the same $M$. We use a piecewise linear regression model to approximate the time series. The loss function for segmentation is composed of two components, i.e., the least square error between the temporal sequence and its linear representation as well as the ratio between the within-class scatter matrix and between-class scatter matrix. Without loss of generality, we omit the subscript of $\bs{s}_i$ for brevity hereafter. Recall that $\bs{s}$ is a time series of arbitrary length and let $\bs{l}$ denote the corresponding piecewise linear regression. The loss function between the time series and its piecewise linear regression is given by $L_S = E_R(\bs{s}, \bs{l})$. Let $b_i$ and $\beta_i$ denote respectively the starting point and slope of the \(i^{th}\) segment, and we assume there are a total of $M$ segments. The following piecewise linear regression model is adopted.

\begin{equation*}
  l(i) =
    \begin{dcases}
      \begin{array}{ll}
        \beta_1+\beta_2(i-b_1),
        & b_1 \leq i \leq b_2, \\
        \beta_1+\beta_2(i-b_1)+\beta_3(i-b_2),
        & b_2 \leq i \leq b_3, \\
        ...  \quad   ... \\
        \beta_1+...+\beta_{M+1}(i-b_{M}),
        & b_M \leq i \leq b_{M+1}.\\
      \end{array}
    \end{dcases}.
\end{equation*}
where $l(i)$ represents the regression result for the $i^{th}$ data point. The above equations can be compactly represented by a linear system
\(\bs{A}\bs{\beta} = \bs{l}\) in which \(\bs{A}\) is the regression matrix
\begin{equation}
  A_{ij} =
  \begin{dcases}
    i-b_j, & \text{if}~~i>b_j,\\
    0,  & \text{otherwise}.\\
  \end{dcases}
\end{equation}
Consequently, let $\left(\cdot \right)^T$ denote the matrix transpose, the sum of square error is given by $ E_R(\bs{s}, \bs{l}) = \bs{e}^T\bs{e}$, where \(\bs{e} = \bs{A}\bs{\beta} - \bs{s}\) and the optimal solution that can minimize the sum of square error is given by \(\bs{\beta}^* = (\bs{A}^T\bs{A})^{-1}\bs{A}^T\bs{s}\). Notice that $b_1$ and $b_{M+1}$ are the starting and end points of the time series respectively.
The breakpoint optimization problem can be written as
\begin{equation} \bs{b}^* =\arg\min_{\bs{b}} E_R\left(\bs{s}, \bs{l}(\bs{b})\right) = \arg\min_{\bs{b}}\bs{e}(\bs{b})^T\bs{e}(\bs{b}),\end{equation}
where $\bs{b} = [b_2, \cdots, b_M]$.
We carry out the following greedy algorithm to solve the above problem and choose a proper $M$ before training the neural network. It progressively adds breakpoints to divide the time series in the training set. For example, in the first iteration, we partition each time series in the training set into two segments. Then we perform K-means clustering on all the temporal segments, where $K$ equals to 2. We subsequently calculate the Calinski-Harabasz index to evaluate the performance of clustering. A higher index implies a better clustering output. We then add another breakpoint to each time series to divide it into three segments. We calculate a Calinski-Harabasz index for each iteration. This algorithm stops and outputs the optimized hyperparameter $M^*$ when this index stars to decrease.

For the running example 2 given in Section \ref{section:1}, the temporal segmentation splits each heartbeat signal into $M$ non-overlapping subsequences. And the breakpoints are semantically meaningful. Fig. \ref{fig:ecg_ecample} shows the case where a normal beat and an abnormal beat are split into three segments, i.e. $M=3$. In this case, the breakpoints are the start and end points of the RS segment. Please note that under the above segmentation approach, the $i$-th subsegment of all beats follows a similar pattern and is quite different from the other subsegments. Segmentation also plays two other key roles in seq2GMM. First, it allows us to localize the anomalous shapelet within a time series which can help users investigate and understand the abnormal patterns within the time series data set. 
In addition, partitioning a single time series into multiple segments allows to mine the temporal dynamics of these segments jointly in a multi-task learning manner, which can effectively strengthen the adversarial robustness of the model \cite{mao2020multitask}.

\begin{figure}[!tbp]
  \centering
    \centering
    \includegraphics[height=2in,width=3in]{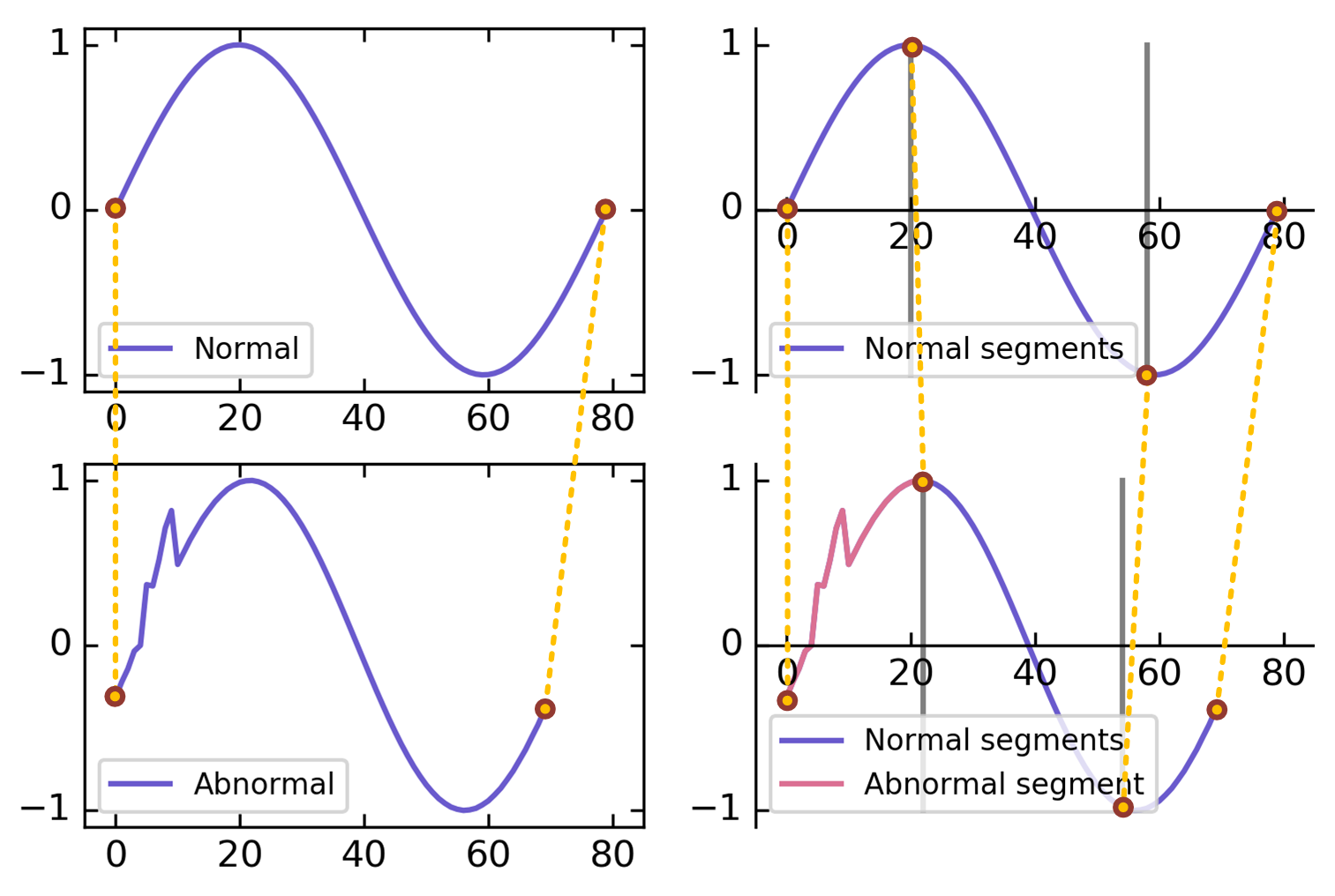}
  \centering
  \caption{Temporal segmentation for normal and anomalous time series.}
  \label{fig:split1}
\end{figure}

\subsection{Temporal Compression Network}

The temporal compression network seeks to acquire a low-dimensional representation of each temporal segment. It is composed of two parts, i.e., a sequence encoder using a gated recurrent unit (GRU) and a decoder with attention mechanism. The low-dimensional representation is given below.

\begin{equation}
\label{eqn:constraint-1}
  \begin{aligned}
      & \bs{h}_c = h(\bs{s},\bs{w}_{en}), \bs{s}' = g_a(\bs{h}_c, \bs{s}_o, \bs{w}_{da}), \\
      & \bs{z}_r = f(\bs{s}, \bs{s}'), \bs{y} = [\bs{h}_c, \bs{z}_r],
  \end{aligned}
\end{equation}
where \(\bs{s}\) is a temporal segment of a time series, that is fed into the sequence encoder and $h(\cdot,\bs{w}_{en})$ represents the sequence encoder with parameters $\bs{w}_{en}$. Likewise, $g_a(\cdot,\bs{w}_{da})$ denotes an attentive sequence decoder with parameters $\bs{w}_{da}$. $\bs{s}_o$ is the attention sequence outputted by the encoder. $\bs{s}'$ represents the reconstruction output from the attentive sequence decoder. $f(\cdot)$ is a function that calculates Euclidean distance and cosine similarity between the input and the reconstruction, and $\bs{z}_r$ is the output of this function. Finally both $\bs{h}_c$ and $\bs{z}_r$ are put into a vector $\bs{y}$ to build a complete representation.

\subsection{GMM Estimation Network}
Given the latent space representation of a temporal segment, the estimation network leverages the framework of GMM to carry out the density estimation. In particular, let $\bs{\mu}_k(i)$ and $\bs{\Sigma}_k(i)$ denote respectively the mean and variance of the $k^{th}$ component in GMM for the $i^{th}$ segment. Recall that $\bs{y}_i$ denote the low-dimensional representation of the $i^{th}$ temporal segment. We assume there is a total of $N_s$ temporal segments in training set. Let $\bs{\gamma}_i = [\gamma_1(i), ..., \gamma_K(i)]^T$ denote a K-dimensional vector, where $\gamma_k(i)$ denote the probability that the $i^{th}$ segment belongs to the $k^{th}$ mixture component, we have

\begin{equation}
\label{eqn:constraint-2}
  \begin{array}{lr}
     \bs{\gamma}_i & = g_e(\bs{y}_i,\bs{w}_{es}),\Phi_k = \sum_{i=1}^{N_s}, \frac{\gamma_{k}(i)}{N_s},\\
     \bs{\mu}_k,\bs{\Sigma}_k & = g^k_g(\{\bs{y}_i, \bs{\gamma}_i\}_{i=1}^{N_s}).
  \end{array}
\end{equation}

$\Phi_k$, $\bs{\mu}_k$, $\bs{\Sigma}_k$ are the mixture weights, the mean, and the covariance of the $k^{th}$ component in the Gaussian mixture model. $g_e(\cdot,\bs{w}_{es})$ represents the estimation network with parameters $\bs{w}_{es}$. $g^k_g(\cdot)$ represents the function that calculates $\bs{\mu}_k,\bs{\Sigma}_k$ from $\{\bs{y}_i, \bs{\gamma}_i\}_{i=1}^{N_s}$, which we detail in Section \ref{subsubsec:GMM_params}, Eq. (\ref{eqn:optimization-GMM}).
Notice that these parameters are calculated in the training process. In addition, the sample energy function $E(\bs{y}_i)$ is given below,

\begin{equation}
\label{eqn:constraint-3}
E(\bs{y}_i)=-\log\left(\sum_{k=1}^K \Phi_k \frac{e^{-\frac{1}{2}[\bs{y}_i-\bs{\mu}_k]^T \bs{\Sigma}_k^{-1}[\bs{y}_i-\bs{\mu}_k]}}{\sqrt{|2\pi\bs{\Sigma}_k|}}\right) .
\end{equation}

It measures the likelihood of a segment sample and will be used to characterize its level of abnormality.

Each beat in the training set in running example 2 is broken into three non-overlapping subsequences, and their latent space distributions are shown in Fig. \ref{fig:split2}, where the three high-density clusters correspond to the latent space representations of the three subsequences of all beats. 
GMM estimation network is further invoked to estimate the density of the above distribution. For the three subsequences of normal beat in Fig. \ref{fig:ecg_ecample}, they have lower anomaly scores because their latent space representations are located in the high-density regions. While three subsequences of abnormal beat are assigned high anomaly scores due to being located in the low-density region, i.e. the red points in Fig. \ref{fig:split2}.

\subsection{Joint Optimization Algorithm}
Seq2GMM jointly optimizes the temporal compression network and the temporal estimation network by making use of a new loss function containing both the reconstruction errors and the likelihood of input segments under the framework of GMM. Recall that $\bs{y}_i$ denote the low-dimensional representation of the $i^{th}$ segment. $\bs{s}(i), \bs{s}'(i)$ represent respectively the $i^{th}$ segment and the reconstructed segment from the attentive decoder. The joint optimization problem for the training of model parameters is given below,

\begin{equation}
  \label{eqn:optimization-1}
  \begin{aligned}
    & \text{min}
    & & \sum_{i=1}^{N_s}D[\bs{s}(i), \bs{s}'(i)] + \lambda\sum_{i=1}^{N_s}E(\bs{y}_i) \\
    & \text{s.t.} & &  (\ref{eqn:constraint-1}),(\ref{eqn:constraint-2}),\\
    & \text{var}  & & \bs{w}_{en},\bs{w}_{da},\bs{w}_{es},\{\bs{\mu}_k,\bs{\Sigma}_k\}.
  \end{aligned}
\end{equation}

The objective function consists of two components. $D(s, s')$ represents the loss function characterizing the reconstruction errors using $L_2$-norm. $E(\bs{y}_i)$ is the sample energy function modeling the probability of observing the input sample. $\lambda$ guides the tradeoff between those two components. The above objective function is highly non-linear and over a large parameter space. It is therefore very challenging to optimize. This is partly because that the sample energy function given in Eq. (\ref{eqn:constraint-3}) is very complex and difficult to optimize. Instead it turns out the objective function in (\ref{eqn:optimization-1}) is convex with respect to $\bs{s}(i), \bs{s}'(i), \{\Phi_k, 1 \leq k \leq K\}$ for fixed $\bs{\mu}_k,\bs{\Sigma}_k$, as shown below.

\begin{prop}
The objective function given in (\ref{eqn:optimization-1}) is a convex function with respect to $\bs{s}(i), \bs{s}'(i), \{\Phi_k, 1 \leq k \leq K\}$ if we fix $\{\bs{\mu}_k,\bs{\Sigma}_k, 1\leq k \leq K\}$ and $\bs{y}_i$.
\end{prop}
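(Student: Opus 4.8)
The plan is to exploit the fact that, once $\{\bs{\mu}_k, \bs{\Sigma}_k\}$ and $\bs{y}_i$ are frozen, the objective in (\ref{eqn:optimization-1}) separates into a reconstruction part that depends only on the block $(\bs{s}(i), \bs{s}'(i))$ and an energy part that depends only on the mixture-weight block $\{\Phi_k\}$. I would establish convexity of each part over its own block and then invoke the closure of convexity under nonnegative summation of functions defined on disjoint variable blocks.

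First I would handle the reconstruction term $\sum_{i} D[\bs{s}(i), \bs{s}'(i)]$. Since $D$ is the $L_2$ reconstruction loss, each summand is the composition of the affine map $(\bs{s}(i), \bs{s}'(i)) \mapsto \bs{s}(i) - \bs{s}'(i)$ with a convex (squared-)norm function; because convexity is preserved under affine precomposition, this term is jointly convex in $(\bs{s}(i), \bs{s}'(i))$ and is entirely independent of $\{\Phi_k\}$.

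Next, and this is the key step, I would argue that the energy term is convex in $\{\Phi_k\}$. With $\bs{y}_i, \bs{\mu}_k, \bs{\Sigma}_k$ held fixed, the Gaussian density values
\[
  c_{ki} = \frac{e^{-\frac{1}{2}[\bs{y}_i - \bs{\mu}_k]^T \bs{\Sigma}_k^{-1}[\bs{y}_i - \bs{\mu}_k]}}{\sqrt{|2\pi \bs{\Sigma}_k|}} > 0
\]
become strictly positive constants, so from (\ref{eqn:constraint-3}) the energy reduces to $E(\bs{y}_i) = -\log\bigl(\sum_{k} \Phi_k c_{ki}\bigr)$. This is the composition of the convex scalar map $t \mapsto -\log t$ on $t > 0$ with the affine map $\bs{\Phi} \mapsto \sum_k \Phi_k c_{ki}$; composing a convex function with an affine map yields a convex function, so each $E(\bs{y}_i)$, and hence $\lambda \sum_i E(\bs{y}_i)$ with $\lambda > 0$, is convex in $\{\Phi_k\}$. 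Finally I would combine the two pieces: because $(\bs{s}(i), \bs{s}'(i))$ and $\{\Phi_k\}$ are disjoint blocks and a function depending on only a subset of coordinates is convex in all coordinates exactly when it is convex in that subset, both terms are jointly convex in the full variable tuple, and their sum is therefore convex, which establishes the claim.

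The main obstacle is the second step: the log-sum-of-exponentials form of $E(\bs{y}_i)$ is generically \emph{non}-convex when $\bs{\mu}_k, \bs{\Sigma}_k$ are also free, which is precisely why the full problem (\ref{eqn:optimization-1}) is difficult. The crux is recognizing that freezing $\bs{\mu}_k, \bs{\Sigma}_k, \bs{y}_i$ collapses this expression into the benign shape ``negative log of an affine function of $\bs{\Phi}$,'' together with the care needed to verify that the coefficients $c_{ki}$ are strictly positive so that $\sum_k \Phi_k c_{ki} > 0$ lies in the convex domain on which $-\log$ is applied.
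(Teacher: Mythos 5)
Your proposal is correct and follows essentially the same route as the paper's own proof: the reconstruction term is convex as an $L_2$-type loss, the energy term collapses to $-\log$ of an affine (linear) combination of $\{\Phi_k\}$ once $\bs{\mu}_k, \bs{\Sigma}_k, \bs{y}_i$ are fixed, and the sum of convex functions is convex. Your version is simply more explicit about the positivity of the Gaussian density coefficients and the disjoint-block structure, which the paper leaves implicit.
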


\begin{proof}
The reconstruction error is an $L_2$-norm thus it is convex with respect to $\bs{s}(i), \bs{s}'(i)$. In addition, for fixed $\bs{\mu}_k,\bs{\Sigma}_k$, the sample energy function can be reduced to a $-log(\cdot)$ function with respect to the linear combination of $\{\Phi_k, 1 \leq k \leq K\}$ and therefore it is a convex function. Thus the Proposition directly follows from the fact that the sum of two convex functions is also a convex function.
\end{proof}

The above observation motivates us to develop a surrogate-based optimization algorithm in the sequel.

\subsubsection{Initialization via Minimizing the Reconstruction Error}

Instead of directly optimizing Eq. (\ref{eqn:optimization-1}), we solely optimize the reconstruction errors via SGD algorithm. The resulting optimization problem is given by,

\begin{equation}
  \label{eqn:optimization-2}
  \begin{aligned}
    & \text{min}
    & & \sum_{i=1}^{N_s}D[\bs{s}(i), \bs{s}'(i)]  \\
    & \text{s.t.} & &  (\ref{eqn:constraint-1}), \\
    & \text{var}  & & \bs{w}_{en},\bs{w}_{da}.
  \end{aligned}
\end{equation}

The resulting encoder-decoder structure provides an initial set of parameters for the compression network. Please note that while we employ the SGD algorithm in this study, other training algorithms such as adaptive moment estimation (Adam) and root mean square (RMSprop) can also be employed in the training process. After the initialization, we obtain not only an initial set of parameters of the encoder-decoder structure but also the latent space samples $\{\bs{y}_i\}$.
Then an unsupervised clustering algorithm will be invoked to group these latent
space samples.

\subsubsection{Calculate the parameters of GMM via hybrid EM algorithm}
\label{subsubsec:GMM_params}
We first cluster the latent space samples via K-means clustering method, which provides a good starting point for the subsequent GMM training. The clustering result will then be processed via an expectation-maximization algorithm for iterative refinement so that the key parameters of the probabilistic Gaussian mixture model can be better inferred. Given $\{\bs{y}_i, 1 \leq i \leq N_s\}$, the algorithm essentially aims to solve the following optimization problem,

\begin{equation}
  \label{eqn:optimization-GMM}
  \begin{aligned}
    & \text{min}
    & & \sum_{i=1}^{N_s}E(\bs{y}_i) \\
    & \text{var}  & & \Phi_k, \bs{\mu}_k , \bs{\Sigma}_k.
  \end{aligned}
\end{equation}

Please notice that seq2GMM may also encounter the \emph{singularity problem as in GMM}, i.e., if the covariance matrix is singular, trivial solutions are obtained during the training. We address this problem via adding a small positive number e.g., $1e-6$ to the diagonal entries of the covariance matrices to prevent it from happening.

\subsubsection{Update network parameters and membership prediction via SGD}

Once we obtain an initial set of GMM parameters, we further refine the parameters of the compression and estimation network as well as the membership prediction in the sequel. Let $\bs{\bar{\mu}},\bs{\bar{\Sigma}}$ denote the results obtained by solving (\ref{eqn:optimization-GMM}). We further define $E'(\bs{y}_i)$ such that
\begin{equation}
\label{eqn:constraint-5}
E'(\bs{y}_i)=-\log\left(\sum_{k=1}^K \Phi_k \frac{e^{-\frac{1}{2}[\bs{y}_i-\bs{\bar{\mu}}_k]^T \bs{\bar{\Sigma}}_k^{-1}[\bs{y}_i-\bs{\bar{\mu}}_k]}}{\sqrt{|2\pi\bs{\bar{\Sigma}}_k|}}\right) .
\end{equation}

The next optimization problem we seek to solve is given below.

\begin{equation}
  \label{eqn:optimization-4}
  \begin{aligned}
    & \text{min}
    & & \sum_{i=1}^{N_s}D[\bs{s}(i), \bs{s}'(i)] + \lambda\sum_{i=1}^{N_s}E'(\bs{y}_i) \\
    & \text{s.t.} & &  (\ref{eqn:constraint-1}), \bs{\gamma}_i  =  g_e(\bs{y}_i,\bs{w}_{es}), \Phi_k  =  \sum_{i=1}^{N_s} \frac{\gamma_{k}(i)}{N_s}, \\
    & \text{var}  & & \bs{w}_{en},\bs{w}_{da},\bs{w}_{es}.
  \end{aligned}
\end{equation}

The surrogate-based optimization algorithm is summarized in Algorithm \ref{algo:Surrogate}.

\begin{algorithm}
	\caption{Surrogate-Based Optimization}
	\label{algo:Surrogate}
	Initialization by solving the optimization problem given in Eq. (\ref{eqn:optimization-2})
  with SGD algorithm to obtain an initial set of parameters for the encoder-decoder structure\;
	\For {$t = 1$ to $T$} {
		Calculate the initial latent space representation. Then applying the hyrbid
    EM algorithm to optimize Eq.(\ref{eqn:optimization-GMM}) and calculate the parameters of GMM \;
    Compute the parameters of the compression network, the estimation network,
    as well as the membership prediction via applying one epoch of SGD to solve the optimization problem given in Eq. (\ref{eqn:optimization-4})\;
	}
	\Return\{$\bs{w}_{es},\bs{w}_{en},\bs{w}_{da},\bs{\mu},\bs{\Sigma}$\}.
\end{algorithm}

\begin{prop}
Let $o_t$ denote the objective function value obtained after the $t^{th}$ iteration. The sequence generated by Algorithm \ref{algo:Surrogate}, i.e., $\{o_1,o_2,...\}$ converges almost surely. In addition, the sequence monotonically decreases almost surely at each iteration.
\end{prop}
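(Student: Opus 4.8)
The plan is to show that the sequence $\{o_t\}$ is (i) bounded below and (ii) weakly decreasing from one iteration to the next almost surely, so that convergence follows from the monotone convergence theorem applied path-wise. I would first dispatch boundedness: the reconstruction term $\sum_i D[\bs{s}(i),\bs{s}'(i)]$ is a sum of $L_2$ norms and hence nonnegative, while each summand of the energy term $E(\bs{y}_i)$ is bounded below because the singularity-regularization step (adding a small positive constant $10^{-6}$ to the diagonal of every $\bs{\Sigma}_k$) keeps the covariances bounded away from singular. This caps each Gaussian density $\mathcal{N}(\bs{y}_i;\bs{\mu}_k,\bs{\Sigma}_k)$ from above by a finite constant, so the mixture density is bounded above and $-\log(\cdot)$ cannot diverge to $-\infty$. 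Hence $o_t \ge c$ for some finite $c$, and $o_t-c \ge 0$.

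Next I would establish the per-iteration decrease by splitting each iteration into its two sub-steps and arguing that neither increases the objective. In the EM sub-step (solving Eq.~\eqref{eqn:optimization-GMM}) the network parameters $\bs{w}_{en},\bs{w}_{da}$ are held fixed, so the latent representations $\bs{y}_i$, the reconstructions $\bs{s}'(i)$, and therefore the entire reconstruction term are unchanged; meanwhile the classical monotonicity of EM---each E-step and M-step weakly decreases the negative log-likelihood $\sum_i E(\bs{y}_i)$---guarantees the energy term does not increase. To compare against the previous iterate's objective one uses that EM is initialized no worse than the incumbent GMM parameters (a warm-start assumption), so the attained energy is at most its value at the start of the iteration. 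A crucial bookkeeping fact is that at the end of this sub-step the surrogate energy $E'$ of Eq.~\eqref{eqn:constraint-5} coincides with the true energy $E$, since $E'$ is constructed from exactly the freshly fitted $\bar{\bs{\mu}}_k,\bar{\bs{\Sigma}}_k$.

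For the SGD sub-step I would invoke the convexity established in the preceding Proposition: with $\bs{\mu}_k,\bs{\Sigma}_k$ frozen, the surrogate objective of Eq.~\eqref{eqn:optimization-4} is convex in the driven quantities, and $E'$ acts as a majorizing surrogate that agrees with the true objective of Eq.~\eqref{eqn:optimization-1} at the current iterate. A descent-lemma argument then shows that a gradient step with a sufficiently small (or diminishing, Robbins--Monro) step size decreases the surrogate in expectation; because the surrogate coincides with the true objective both at the start of the epoch and at its end (where $E'=E$ once more), this pushes the true objective down. To upgrade the in-expectation statement to the almost-sure claim, I would treat $o_t-c$ as a nonnegative supermartingale with respect to the filtration generated by the SGD minibatch draws and apply the Robbins--Siegmund (supermartingale) convergence theorem, yielding almost-sure convergence; combined with the weak decrease at every sub-step this delivers the stated almost-sure monotonicity.

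The step I expect to be the main obstacle is precisely the SGD sub-step: unlike the deterministic EM sub-step, a stochastic gradient epoch does not decrease the objective path-wise for free, so the ``monotonically decreases almost surely'' assertion must be underpinned either by a descent lemma under a controlled Lipschitz/step-size regime or by the supermartingale argument above. A secondary delicate point is that the two blocks are not perfectly decoupled---the mixture weights $\Phi_k$ depend on $\bs{w}_{es}$ through $\bs{\gamma}_i=g_e(\bs{y}_i,\bs{w}_{es})$---so I must verify that $E'$ continues to majorize (or at least match at the current iterate) the true energy $E$ as $\Phi_k$ drifts during the epoch, which is exactly what legitimizes the telescoping inequality $o_{t+1}\le o_t$.
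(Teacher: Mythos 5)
Your proposal follows essentially the same route as the paper's own proof: boundedness from below, monotonicity of the EM sub-step, Robbins--Siegmund for the stochastic-gradient sub-step, and monotone convergence to conclude. In fact your version is considerably more careful than the paper's two-sentence argument --- in particular, your observations that a single SGD epoch is not path-wise monotone without a supermartingale or descent-lemma argument, that EM must be warm-started at the incumbent parameters, and that the coupling of $\Phi_k$ to $\bs{w}_{es}$ must be checked, are all genuine gaps that the published proof silently elides.
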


\begin{proof}
It follows from the Robbins-Siegmund theorem that the SGD algorithm converges almost surely to a local optimal solution as long as the learning rate of the SGD decreases at an appropriate rate \cite{SGD-convergence-book}. In addition, the EM algorithm is guaranteed to converge to a local optimal \cite{EM-convergence} under mild assumptions. Therefore, we have $o_{t+1} \leq o_t$ almost surely. Since $o_t$ is bounded from below, the sequence generated by Algorithm \ref{algo:Surrogate} converges almost surely.
\end{proof}

In addition, we have the following proposition that provides both upper and lower bounds to the optimal objective function value of (\ref{eqn:optimization-1}).
\begin{prop}
Let $o^{(1)},o^{(2)}$ denote the optimal objective function values to (\ref{eqn:optimization-2}), (\ref{eqn:optimization-1}) respectively. Let $\tilde{\bs{w}}_{en},\tilde{\bs{w}}_{da},\tilde{\bs{w}}_{es}$ denote the optimal solution to (\ref{eqn:optimization-4}). Let $o^{(3)}$ denote the objective function value obtained by substituting $\tilde{\bs{w}}_{en},\tilde{\bs{w}}_{da},\tilde{\bs{w}}_{es}$ into (\ref{eqn:optimization-1}). We have $o^{(1)} \leq o^{(2)} \leq o^{(3)}$.
\end{prop}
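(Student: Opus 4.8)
The plan is to treat the chain $o^{(1)} \le o^{(2)} \le o^{(3)}$ as two independent comparisons, each of which reduces to the elementary fact that the optimal value of a minimization problem (i) never exceeds the objective evaluated at any feasible point, and (ii) cannot decrease when a nonnegative term is added to the objective. I would establish the upper bound $o^{(2)}\le o^{(3)}$ first, as it is the more direct, and then the lower bound $o^{(1)}\le o^{(2)}$.

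For the upper bound, recall that $o^{(3)}$ is by definition the value of the objective of (\ref{eqn:optimization-1}) obtained after substituting the optimizer $\tilde{\bs{w}}_{en},\tilde{\bs{w}}_{da},\tilde{\bs{w}}_{es}$ of (\ref{eqn:optimization-4}). I would observe that this substitution yields a feasible point of (\ref{eqn:optimization-1}): constraint (\ref{eqn:constraint-1}) is merely the definition of $\bs{y}_i$ and $\bs{s}'(i)$ from the network weights, and the mixture relations in (\ref{eqn:constraint-2}) are satisfied automatically, since substituting the weights determines $\bs{\gamma}_i = g_e(\bs{y}_i,\bs{w}_{es})$, the weights $\Phi_k$, and the Gaussian parameters $\bs{\mu}_k,\bs{\Sigma}_k = g^k_g(\{\bs{y}_i,\bs{\gamma}_i\})$ in a mutually consistent way. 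Because $o^{(2)}$ is the minimum of (\ref{eqn:optimization-1}) over its feasible set and $o^{(3)}$ is the objective at one such feasible point, $o^{(2)}\le o^{(3)}$ follows at once.

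For the lower bound, I would start from any optimizer of (\ref{eqn:optimization-1}) and retain only its encoder-decoder weights $\bs{w}_{en},\bs{w}_{da}$. These constitute a feasible point of (\ref{eqn:optimization-2}), whose objective is exactly the reconstruction term; hence the reconstruction sum $\sum_{i} D[\bs{s}(i),\bs{s}'(i)]$ evaluated at this optimizer is at least $o^{(1)}$. Writing $o^{(2)} = \sum_{i} D[\bs{s}(i),\bs{s}'(i)] + \lambda\sum_{i} E(\bs{y}_i)$ and discarding the energy contribution, I would obtain $o^{(2)} \ge \sum_{i} D[\bs{s}(i),\bs{s}'(i)] \ge o^{(1)}$, which closes the chain.

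The hard part is precisely this last discard step, which silently requires $\lambda\sum_{i} E(\bs{y}_i)\ge 0$. Since $E(\bs{y}_i)$ in (\ref{eqn:constraint-3}) is a $-\log(\cdot)$ of a mixture density, it can turn negative whenever that density exceeds one, so strict nonnegativity is not automatic. I would close this gap in one of two ways: either adopt the working convention under which the sample energy is treated as a nonnegative penalty, so that the clean bound $o^{(2)}\ge o^{(1)}$ holds verbatim; or, more carefully, retain the energy term and supply a uniform lower bound on $E(\bs{y}_i)$ derived from the diagonal loading added to the covariance matrices in Section \ref{subsubsec:GMM_params}, which caps the attainable density and hence floors the energy. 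A minor bookkeeping point in the upper-bound argument is to confirm that the latent samples feeding $g^k_g$ are exactly those induced by $\tilde{\bs{w}}_{en},\tilde{\bs{w}}_{da}$, so that the consistency, and thus the feasibility, invoked above genuinely holds.
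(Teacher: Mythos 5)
Your proof follows essentially the same route as the paper: the lower bound $o^{(1)}\leq o^{(2)}$ comes from restricting the optimizer of (\ref{eqn:optimization-1}) to a feasible point of (\ref{eqn:optimization-2}) and dropping the energy term, and the upper bound $o^{(2)}\leq o^{(3)}$ comes from noting that the substituted weights give a feasible point of (\ref{eqn:optimization-1}) whose value cannot beat the minimum. The one place you go beyond the paper is worth noting: the paper simply asserts that the sample energy is non-negative, whereas you correctly observe that $E(\bs{y}_i)=-\log(\cdot)$ of a mixture \emph{density} can be negative when the density exceeds one, so the discard step needs an extra hypothesis or convention; your proposed fix via the diagonal loading only floors the energy at a negative constant rather than at zero, so the clean chain $o^{(1)}\leq o^{(2)}$ really does require either a normalization assumption on the densities or an additive shift of the energy, a caveat the paper's own proof silently omits.
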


\begin{proof}
Let $\{\bs{s}'^{(1)}(i)\}$ represent the reconstructed segments with minimum reconstruction error via solving (\ref{eqn:optimization-2}). Likewise, $\{\bs{s}'^{(2)}(i)\}$ are the reconstructed segments obtained by solving (\ref{eqn:optimization-1}). It follows that 
\begin{equation}
    \begin{aligned} 
    & \sum_{i=1}^{N_s}D[\bs{s}(i), \bs{s}'^{(1)}(i)] \leq \sum_{i=1}^{N_s}D[\bs{s}(i), \bs{s}'^{(2)}(i)] \\ & \leq \sum_{i=1}^{N_s}D[\bs{s}(i), \bs{s}'^{(2)}(i)] + \lambda\sum_{i=1}^{N_s}E(\bs{y_i}).
    \end{aligned}
\end{equation} 
The first inequality follows from the fact that $\{\bs{s}'^{(1)}(i)\}$ are the reconstructed segments with minimum reconstruction error. The second inequality is due to the fact that the sample energy function is non-negative. Therefore $o^{(1)} \leq o^{(2)}$. Furthermore, since $o^{(2)}$ is the optimal objective function value to (\ref{eqn:optimization-1}), we have $o^{(2)} \leq o^{(3)}$. Thus $o^{(1)} \leq o^{(2)} \leq o^{(3)}$. These bounds can be used to estimate the optimality gap, i.e., how far is the current solution from being optimal.
\end{proof}

\subsection{Data Augmentation}
Data augmentation provides an effective means to increase the diversity of training data by adding modified data from the existing ones. 
In this section, we show how seq2GMM can make use of the data augmentation methods to improve its robustness against the timing errors. For each training sample, we randomly delete 5\% or 10\% of the data points to simulate the signal deletion due to Type-2 timing errors. These synthetic data copies are then added to the training dataset to optimize the seq2GMM model.

\subsection{Recurrent Anomaly Shapelet Localization}

A detected anomaly shapelet could provide rich semantic information for users to understand the reason that such a time series is deemed as being anomalous. Please note that as opposed to the traditional shapelet discovery approaches, the recurrent anomaly shapelets as well as the associated anomaly scores are all calculated in the latent space through a recurrent encoder-decoder architecture.

\section{Experiment Results}\label{section:5}
Extensive experiments have been conducted to assess the performance of the proposed seq2GMM framework. We first evaluate its performance on a synthetic dataset to illustrate how it remains effective robustly for rhythmic time series with timing errors. 
For clarity, we generate a $\sin(\cdot)$ signal and obtain multiple copies via cyclically shifting the original time series to mimic Type-1 timing errors. We also generate anomalous samples via injecting synthetic anomalies into the original $\sin(\cdot)$ signal. Fig. \ref{fig:synthetic} illustrates the latent space representation of the normal time series as well as the synthetic anomalies. It is evident that via learning from multiple time series simultaneously, seq2GMM can successfully group non-anomalous time series together in the latent space even if there exist timing drifts between them. In addition, the latent space representation of the anomalous time series will deviate significantly from non-anomalous ones even if the anomalous signal spans only a short period in the time domain.

\begin{figure}[!tbp]
  \centering
  \subfigure[Cyclically shifted normal time series and anomalous time series]{
    \centering
    \includegraphics[height=1in,width=1.6in]{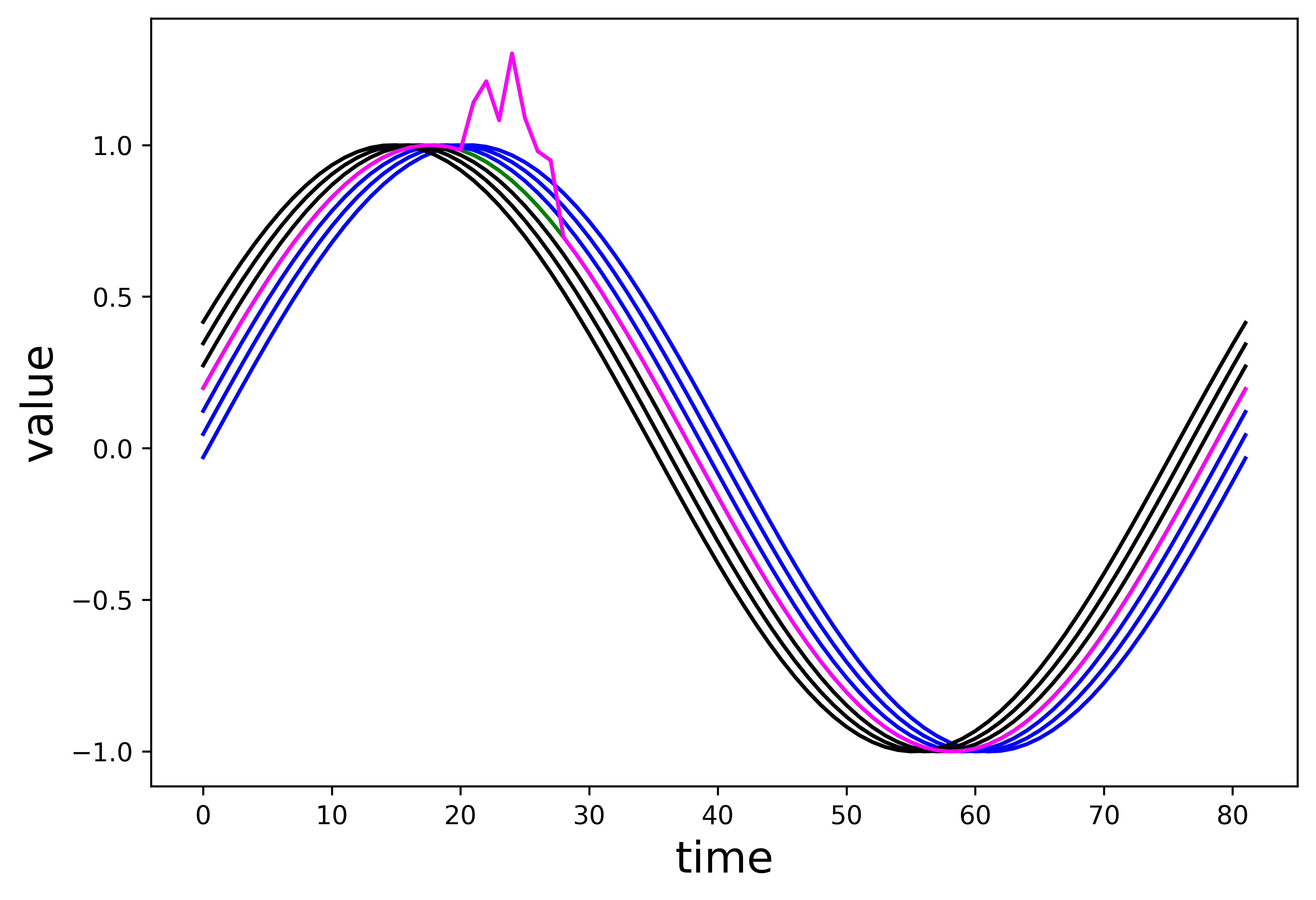}
  }\ 
  \subfigure[Latent space representation of shifted normal time series and anomalous time series.]{
    \centering
    \includegraphics[height=1in,width=1.6in]{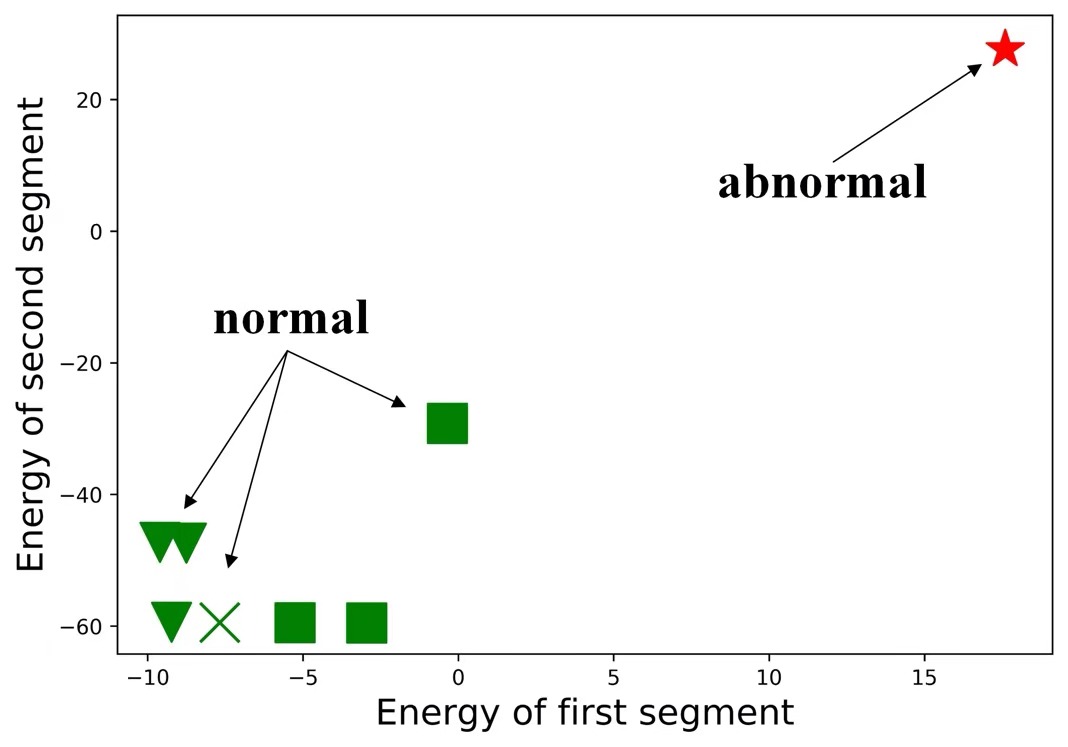}
  }%
  \quad
  \subfigure[Normal time series and anomalous time series with timing errors.]{
    \centering
    \includegraphics[height=1in,width=1.6in]{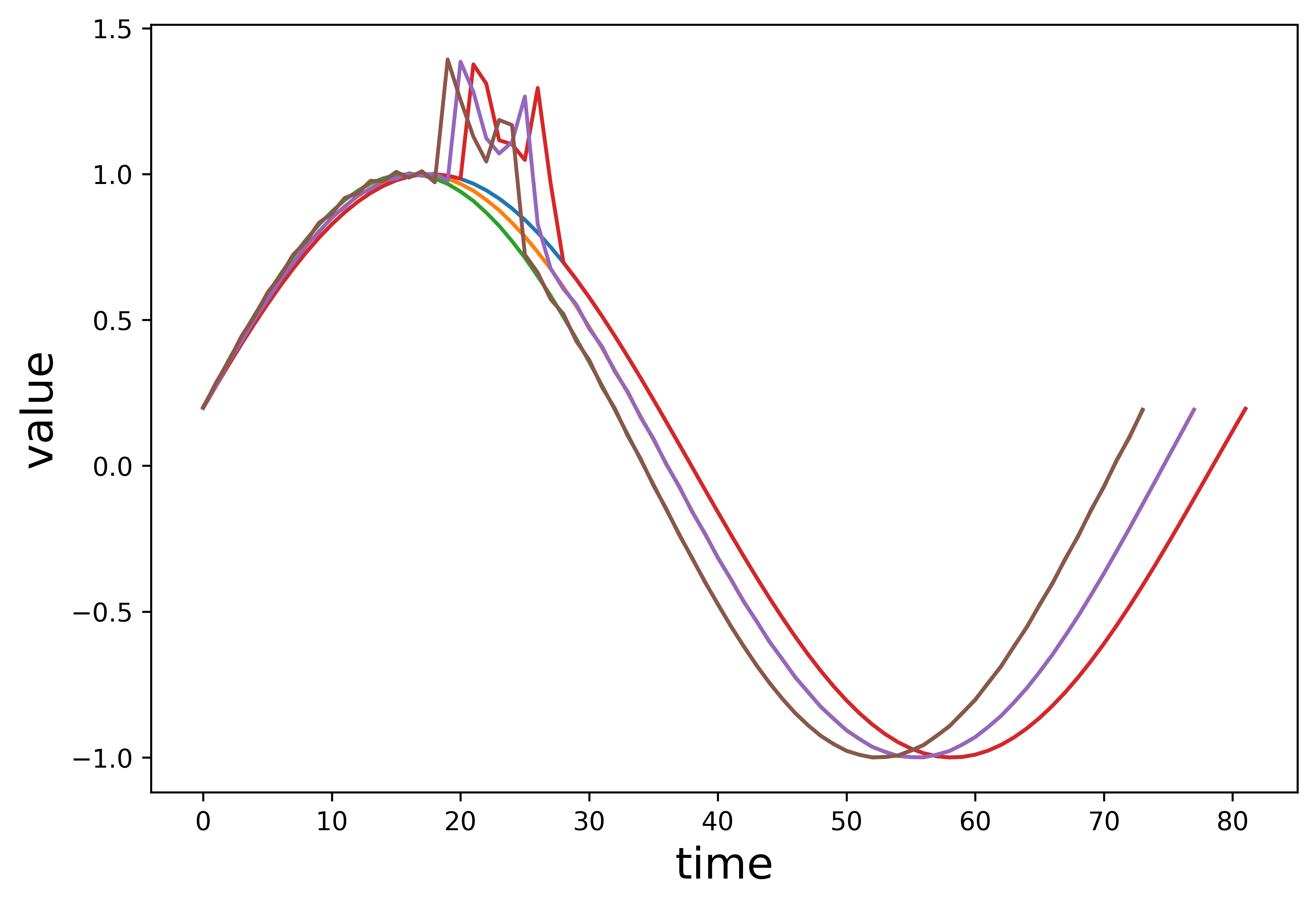}
  }\ 
  \subfigure[Latent space representations of normal time series and anomalous time series with timing errors.]{
    \centering
    \includegraphics[height=1in,width=1.6in]{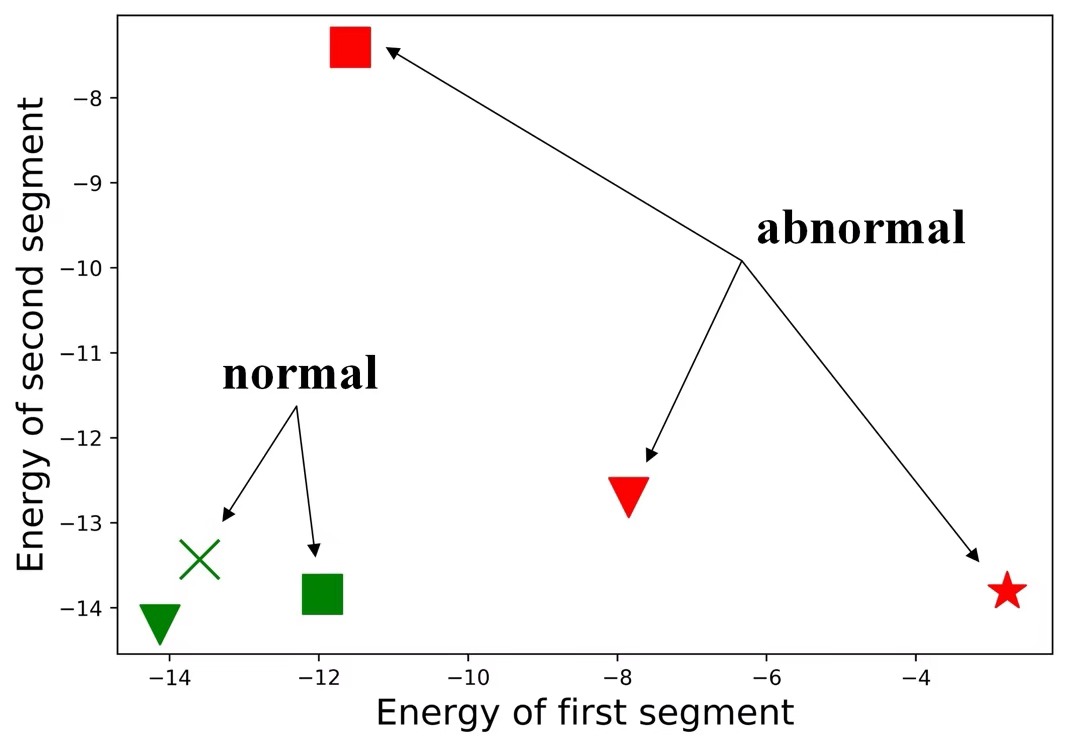}
  }%
  \centering
  \caption{Representative time series are given in (a) and (c).
In figure (b), the cross represents the original time series. The red star denotes the latent space representation of the anomalous time series. The blue squares represent the right-shifted normal time series while the green inverted triangles are the left-shifted time series. In figure (d), the cross represents the original time series. The red star, inverted triangle and square represent the anomalous time series with different degrees of timing errors. The green inverted triangle and square denote the latent space representation of the normal time series with different degrees of timing errors.}
  \label{fig:synthetic}
\end{figure}

The proposed model has been further evaluated on a multitude of benchmark datasets from UCR time series, including the TwoLeadECG, ProximalPhalanxTW, FacesUCR, and MedicalImage datasets, as well as the CMU motion capture database. 
Please note that TwoLead ECG is particularly useful for diagnosing rhythm disturbances, which often happen for patients on a coronary care or during operations. 
CMU motion capture database contains motion capture records, i.e., walking, running, jumping, of different human body postures. The data are collected from multiple sensors, and the output of each sensor is a univariate time series. In the experiment, walking motion capture records are viewed as normal time series while the others are deemed as anomalies. The training set size is 910, the test set size is 236. 
Multivariate time series from a total of four sensors are chosen for the experiments and anomaly detection is performed in each dimension, i.e., CMU-1 to CMU-4, respectively.
As an alternative way, multi-dimensional recurrent neural network \cite{graves2007multi} can be employed to process multivariate data and learn the latent space distribution directly.
Please note that the aforementioned datasets fall into different application domains. In particular, TwoLeadECG dataset falls into the domain of health informatics; CMU-1 to CMU-4 datasets can be used for IoT data analysis (motion analysis); ProximalPhalanxTW, FacesUCR, and MedicalImage datasets can be used for shape analysis.

In particular, we conducted experiments aiming to answer the following questions.
\begin{itemize}
\item Q1. \textbf{Accuracy}: What level of accuracy seq2GMM can achieve compared with other state-of-the-art baseline algorithms? Will the performance degrade if there are anomalies in the training dataset?
\item Q2. \textbf{Timing errors}: Can seq2GMM model be employed to detect anomalies in time series with \emph{timing errors}?
\item Q3. \textbf{Localization}: Does seq2GMM offer a visualization and localization interface that helps human experts localize the anomaly shapelet and understand why a rhythmic time series is deemed as an anomaly?
\end{itemize}

\subsection{Accuracy of Anomaly Detection}
We follow the method described in \cite{4781172} to create the datasets for study from the UCR time series classification datasets. Each UCR time series classification dataset contains a number of classes and one of them will be chosen as the non-anomalous major class. We then randomly sample the remaining data and inject these anomalies into the major class to construct the final dataset for the experiment. The standard AUC score (area under the receiver operator characteristic curve) and standard AUPR score (area under the precision recall curve) are used for characterizing the anomaly detection accuracy.
For each dataset, the AUC score and AUPR score are averaged over twenty runs.  Ten state-of-the-art anomaly detection techniques are compared with the proposed seq2GMM method as baseline methods, including LOF \cite{breunig2000lof}, DTW (dynamic time warping),  DTW+LoF, Parzen Windows \cite{yeung2002parzen}, DL (dictionary learning) OCSVM \cite{bevilacqua2015dictionary}, FD (frequency domain) OCSVM \cite{scholkopf1999support}, Matrix Profile \cite{yeh2016matrix}, LSTM-ED \cite{malhotra2016lstm}, DAGMM \cite{zong2018deep}, BeatGAN \cite{BeatGAN}.

The experiment is conducted on a computer server with four NVIDIA GTX1080, and we use TensorFlow to implement the deep learning model. The model \emph{hyperparameters} including the size of the hidden layer of the sequence autoencoder, the total number of temporal segments $M$, and the total number of mixture components $K$ in GMM are optimized by a validation dataset.

It is seen from Table \ref{tab:AUC} that the proposed unsupervised seq2GMM outperforms all other anomaly detection methods on the four benchmark datasets. In particular, for the TLECG datasets, it is seen that seq2GMM can provide around 6\% performance gain over the second-best algorithm. DAGMM and LSTM-ED achieve respectively the second-best result on two benchmark datasets. The AUPR scores are provided in  Table \ref{tab:AUPR}. These results reveal the superior performance of seq2GMM and underscore the significant benefits of combining sequence attentive autoencoder-decoder architecture with GMM for anomaly detection.

\begin{table*}[!t]
\centering
\caption{Average AUC scores of seq2GMM and the baseline methods. The best and the second-best results are shown in bold and underline, respectively.}\label{tab:AUC}
\scalebox{0.9}{
\begin{tabular}{@{}ccccccccccc@{}}
\toprule
               & CMU-1          & CMU-2          & CMU-3           & CMU-4           & TLECG          & FacesUCR        & MedicalImages  & PPTW           & Average rank  \\ \midrule
DTW            & 95.55          & 87.80          & 92.96           & 97.34           & 83.82          & 93.37           & 46.11          & 92.06          & 6.50          \\
LOF            & 77.49          & 80.55          & 83.76           & 85.24           & 70.17          & 91.70           & 66.89          & 89.09          & 7.75          \\
DTW+LoF        & \textbf{99.87} & {\ul 99.51}    & {\ul 99.89}     & \textbf{100.00} & 70.09          & 98.90           & 66.51          & 85.71          & 4.00          \\
DL-OCSVM       & 58.37          & 65.12          & 99.77           & \textbf{100.00} & 64.92          & 67.95           & 54.65          & 93.65          & 6.88          \\
FD-OCSVM       & 96.23          & 87.22          & 98.60           & \textbf{100.00} & 87.68          & 96.92           & 51.27          & 91.47          & 4.75          \\
ParzenW        & 95.12          & 83.95          & 90.70           & 94.41           & 80.72          & 91.67           & \textbf{73.01} & 89.48          & 6.75          \\
Matrix Profile & 52.38          & 52.38          & 54.76           & 50.00           & 50.89          & 50.00           & 50.00          & 64.29          & 10.88         \\
LSTM-ED        & 95.28          & 94.57          & 95.50           & \textbf{100.00} & 84.93          & 96.13           & 66.81          & 94.24          & 4.13          \\
DAGMM          & 94.29          & 93.51          & 68.83           & 64.72           & 58.45          & 83.90           & 64.73          & 90.34          & 8.13          \\
BeatGan        & 96.17          & 97.76          & \textbf{100.00} & \textbf{100.00} & {\ul 87.75}    & {\ul 99.92}     & 63.18          & {\ul 95.03}    & {\ul 2.75}    \\
Seq2GMM        & {\ul 99.71}    & \textbf{99.71} & 99.71           & {\ul 99.78}     & \textbf{93.23} & \textbf{100.00} & {\ul 67.34}    & \textbf{98.08} & \textbf{2.25} \\ \bottomrule
\end{tabular}}
\end{table*}

\begin{table*}[!t]
\centering
\caption{Average AUPR scores of seq2GMM and the baseline methods. The best and the second-best result are shown in bold and underline, respectively.}\label{tab:AUPR}
\scalebox{0.9}{
\begin{tabular}{@{}ccccccccccc@{}}
\toprule
               & CMU-1          & CMU-2          & CMU-3          & CMU-4           & TLECG          & FacesUCR        & MedicalImages  & PPTW           & Average rank  \\ \midrule
DTW            & 90.66          & 87.06          & 83.50          & 96.03           & 36.29          & 65.64           & 13.90          & 87.67          & 6.13          \\
LOF            & 49.62          & 66.25          & 67.22          & 63.30           & 30.19          & 31.27           & 16.69          & 87.01          & 9.00          \\
DTW+LoF        & \textbf{98.92} & \textbf{97.53} & \textbf{98.95} & \textbf{100.00} & 23.98          & 87.02           & 21.94          & 86.90          & 4.13          \\
DL-OCSVM       & 91.00          & 87.04          & 84.86          & \textbf{100.00} & 30.02          & 63.04           & 14.42          & 87.42          & 5.88          \\
FD-OCSVM       & 93.85          & 86.78          & {\ul 94.71}    & \textbf{100.00} & 51.87          & 62.54           & 13.44          & 87.59          & 5.25          \\
ParzenW        & 75.00          & 61.67          & 67.21          & 73.33           & 44.63          & 63.94           & 27.39          & 87.18          & 7.38          \\
Matrix Profile & 56.62          & 56.62          & 58.78          & 54.45           & 55.29          & 54.49           & \textbf{54.53} & 67.45          & 8.25          \\
LSTM-ED        & 94.05          & 89.57          & 84.30          & \textbf{100.00} & {\ul 72.97}    & 67.07           & 20.58          & 88.97          & {\ul 3.38}    \\
DAGMM          & 86.41          & 55.45          & 19.51          & 15.34           & 53.89          & 61.32           & 23.38          & 87.45          & 8.00          \\
BeatGan        & 77.29          & 70.31          & 76.66          & {\ul 96.21}     & 64.01          & {\ul 99.31}     & 11.54          & {\ul 93.59}    & 5.75          \\
Seq2GMM        & {\ul 96.39}    & {\ul 92.22}    & 89.88          & \textbf{100.00} & \textbf{81.68} & \textbf{100.00} & {\ul 35.79}    & \textbf{98.09} & \textbf{1.63} \\ \bottomrule
\end{tabular}}
\end{table*}

\subsection{Model Training with Contaminated Data}
We also assess the performance of the proposed framework in the presence of contaminated training data via artificially injecting anomalies into the training dataset, as shown in Table \ref{tab:percentage}. It is seen that the seq2GMM remains robust against anomalies in the training dataset and only degrades slightly even when a total of 10\% of the training set number of anomalies have been injected into the training set.

\begin{table}[!t]
  \centering
  \caption{AUC scores of seq2GMM when training data is contaminated. The numbers in brackets after the dataset name and AUC scores are the size of original training set and the number of injected anomaly samples, respectively.} \label{tab:percentage}
  \scalebox{0.9}{
  \begin{tabular}{cccc}
  \toprule
  & MedicalImg (203) & PPTW (180)& TLECG (12)\\
  \midrule
  & 67.34 (0)& 98.08 (0)& 93.23 (0)\\
  & 66.75 (10)& 94.84 (9)& 91.73 (1)\\
  & 66.36 (20)& 94.24 (18)& 90.21 (2)\\
  \midrule
  loss &0.98\%& 3.24\%& 3.02\% \\
  \bottomrule
  \end{tabular}
  }
  \centering

  \end{table}

\subsection{Model Training with Augmented Dataset}

As shown in Table \ref{tab:variable-length}, seq2GMM performs well robustly and degrade only marginally in the presence of Type-2 timing errors, i.e. a total of 10\% sampling data points are removed. As a comparison, the performance of BeatGAN degrades considerably when the sequence length varies.

The results in Table \ref{tab:AUC}, Table \ref{tab:AUPR} and Table \ref{tab:variable-length} indicate the robustness of seq2GMM against timing errors.
The quasi-periodic time series containing timing errors can be viewed as adversarial samples \cite{ryan2019pattern}.
It is well known that adversarial training with adversarial samples helps to improve the model robustness. In other words, the obtained machine learning model remain robust in the presence of timing errors.
Secondly, Seq2GMM divides the sequence into multiple segments and train a neural network to learn these segments jointly. Since various segments are linked with different temporal trends and dynamics, learning them jointly can be viewed as a multi-task learning approach, which can effectively strengthen the adversarial robustness of the obtained machine learning model \cite{mao2020multitask}.

The above analysis inspires us to improve the robustness of seq2GMM by augmenting the training data set. As shown in Table \ref{tab:variable-length}, compared with seq2GMM, seq2GMM$^{aug}$ has a significant performance improvement on all test sets, and the performance degradation with timing errors is significantly smaller than that of seq2GMM.

\begin{table}[!t]
  \centering
  \caption{AUC scores for different sequence lengths of two lead ECG data}\label{tab:variable-length}
  \scalebox{0.9}{
  \begin{tabular}{lccc}
    \toprule
    Ratio          & 100\%            & 95\%            & 90\%           \\ \midrule
    DTW            & 83.82          & 82.34          & 81.35          \\
    LOF            & 70.17          & 70.26          & 70.24          \\
    DTW+LoF        & 70.09          & 65.04          & 54.64          \\
    DL-OCSVM       & 64.92          & 71.99          & 73.53          \\
    FD-OCSVM       & 87.68          & 73.23          & 58.06          \\
    ParzenW        & 80.72          & 70.81          & 67.64          \\
    Matrix Profile & 50.89          & 50.89          & 50.89          \\
    LSTM-ED        & 84.93          & 80.98          & 77.96          \\
    DAGMM          & 58.45          & 51.37          & 50.18          \\
    BeatGAN        & 87.75          & 78.87          & 71.98          \\
    Seq2GMM        & {\ul 93.23}    & {\ul 88.39}    & {\ul 87.81}    \\ 
    Seq2GMM$^{aug}$& \textbf{97.11} & \textbf{97.45} & \textbf{95.95} \\ \bottomrule
  \end{tabular}}
  \end{table}

We also conduct anomaly detection experiments on time series containing Type-2 timing errors in which the length of the train time series remains the same as those of the test time series. More concretely, we randomly drop 10\% of the data points from each of the time series to obtain a new dataset. We then split this dataset into training and testing dataset to assess the anomaly detection performance of the proposed seq2GMM framework. As shown in Table \ref{tab:irregular}, the proposed seq2GMM model outperforms other state-of-the-art methods by a large margin.

\begin{table*}[!t]
\centering
\caption{AUC scores for time series with Type}\label{tab:irregular}
\scalebox{0.9}{
\begin{tabular}{@{}ccccccccccc@{}}
\toprule
               & CMU-1                & CMU-2          & CMU-3           & CMU-4           & TLECG          & FacesUCR       & MedicalImages  & PPTW           & average rank  \\ \midrule
DTW            & 95.61                & 87.69          & 92.85           & 97.85           & 85.67          & 86.87          & 45.24          & 94.66          & 5.75          \\
LOF            & 77.05                & 76.94          & 82.70           & 85.09           & 69.49          & 81.56          & 66.87          & 62.64          & 8.00          \\
DTW+LoF        & \textbf{99.76} & {\ul 98.43}    & \textbf{100.00} & \textbf{100.00} & 66.62          & {\ul 95.92}    & 59.37          & {\ul 94.69}    & {\ul 2.75}    \\
DL-OCSVM       & 64.88       & 63.95          & 96.74           & 98.83           & 65.11          & 63.09          & 47.85          & 68.18          & 8.38          \\
FD-OCSVM       & 96.79                & 87.95          & 98.41           & \textbf{100.00} & 83.31          & 85.39          & 51.26          & 85.34          & 5.00          \\
ParzenW        & 87.90                & 84.41          & 85.58           & 88.60           & 69.94          & 85.40          & \textbf{72.04} & 92.29          & 6.25          \\
Matrix Profile & 54.76                & 52.38          & 54.76           & 49.77           & 49.91          & 49.83          & 51.28          & 52.22          & 10.50         \\
LSTM-ED        & 94.86                & 94.51          & 92.89           & \textbf{100.00} & 83.23          & 89.48          & 64.15          & 93.64          & 4.13          \\
DAGMM          & 83.96                & 90.23          & 67.24           & 52.56           & 60.59          & 62.44          & 59.01          & 92.59          & 8.00          \\
BeatGan        & 89.92                & 95.25          & {\ul 99.99}     & {\ul 99.99}     & {\ul 85.96}    & 91.91          & 44.47          & 91.67          & 4.75          \\
Seq2GMM        & {\ul 99.69}          & \textbf{99.60} & 97.54           & 99.60           & \textbf{89.14} & \textbf{97.63} & {\ul 67.09}    & \textbf{97.71} & \textbf{2.13} \\ \bottomrule
\end{tabular}}
\end{table*}

\subsection{Anomaly Visualization and Anomaly Shapelet Localization}

\begin{figure}[!tbp]
  \centering
    \centering
    \includegraphics[width=2in]{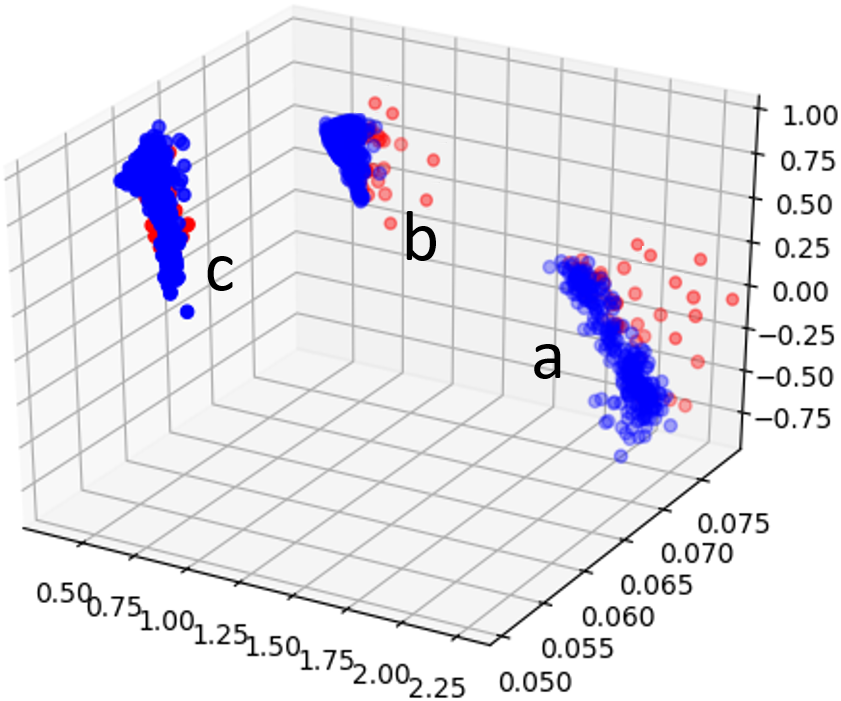}
  \centering
  \caption{
  Latent space representation of temporal segments. The blue and red points in
  latent space correspond to the anomalous and non-anomalous time segments respectively.}
  \label{fig:split2}
\end{figure}

The obtained anomalies can be visualized and analyzed in the latent space via the proposed encoder-decoder structure, as shown in Fig. \ref{fig:split2}.
In the latent space, temporal segments are clustered and those far away from the center of the associated cluster are deemed as anomalies.
We can obtain an anomaly score for the latent space representation of each temporal segment. Upon obtaining the anomaly score, we can then pinpoint the \emph{recurrent anomaly shapelets} in the original space to help human experts understand why this time series is deemed as an anomaly, as shown in Fig. \ref{fig:analysis}. Therefore, seq2GMM provides \emph{local interpretability}.

\begin{figure*}[t]
  \centering
    \centering
    \includegraphics[width=6.5in]{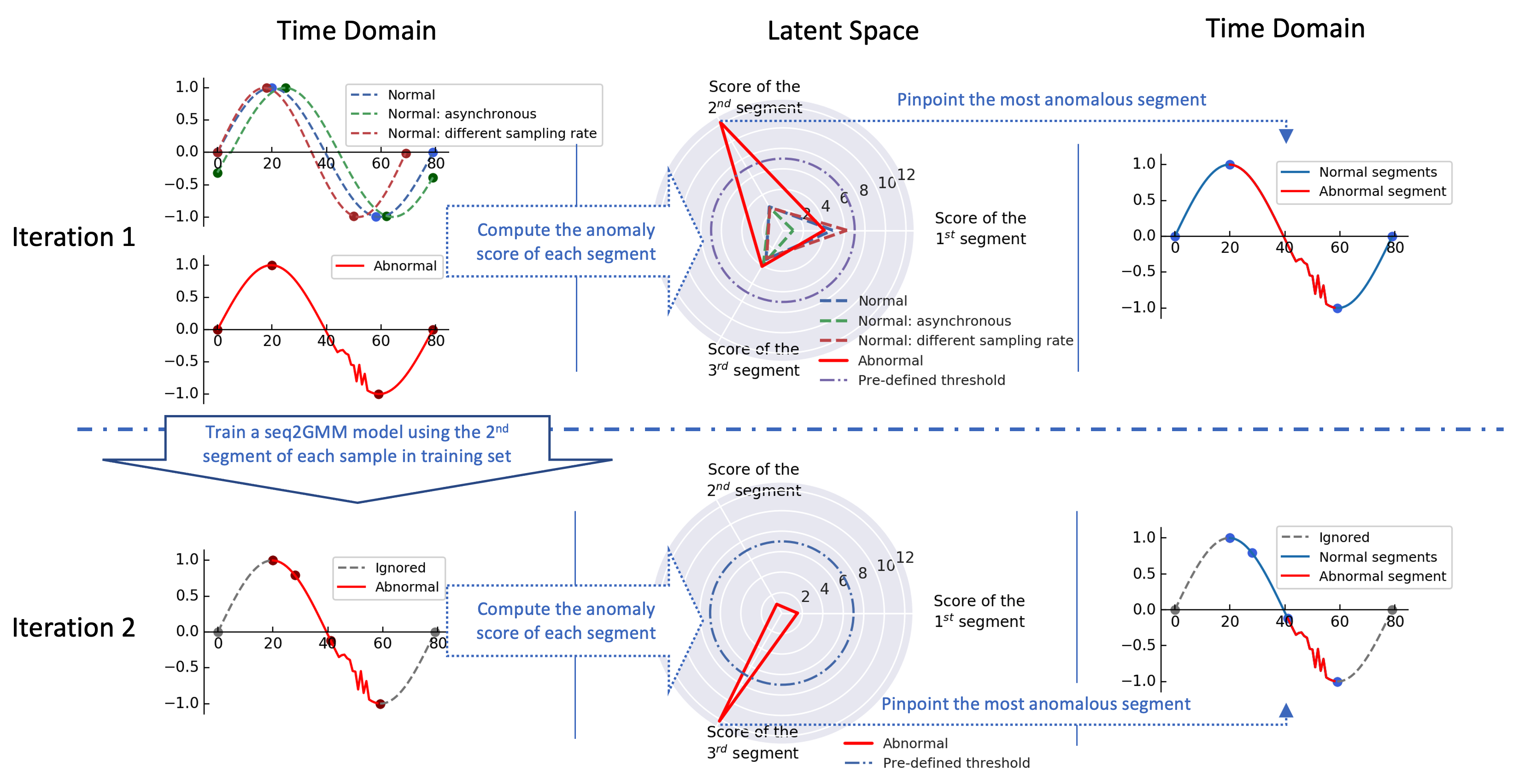}
  \centering
  \caption{Anomaly visualization and anomaly shapelet localization via analysis in latent space.}
  \label{fig:analysis}
\end{figure*}

\subsection{The Convergence Behavior}

\begin{figure}[htbp]
  \centering
  \includegraphics[width=3in]{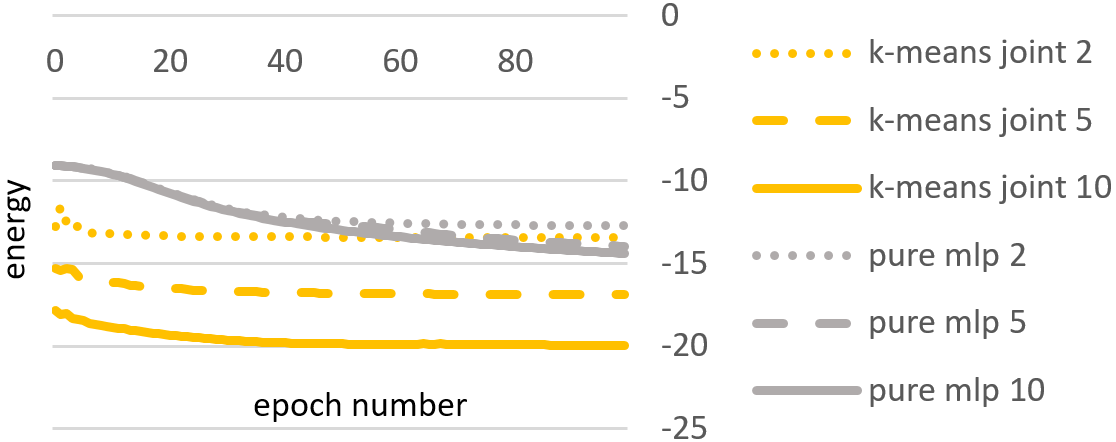}
  \caption{The convergence behaviors of the proposed surrogate-based training algorithm for different models, i.e., joint clustering with estimation network and the estimation network (MLP), in which 2, 5 and 10 denote respectively the number of components in GMM.}
  \label{fig:kmeans}
\end{figure}

Fig. \ref{fig:kmeans} compares the convergence behaviors of the proposed surrogate-based training algorithm with that of the traditional SGD algorithm. The proposed surrogate-based training strategy combines both the SGD and EM algorithm. It is seen that it can quickly converge to a set of parameters with low sample energy. On the other hand, the traditional SGD algorithm reaches an error floor after a few epochs.

\subsection{Ablation Study}

We conduct an ablation study to elucidate the role of time series segmentation in the proposed framework. Please note that we fix all hyper-parameters except the number of time series segments $M$ during the experiment. Table \ref{tab:ablation1} lists the average AUC scores after multiple runs over different segments settings. It is seen that the time series segmentation can effectively improve the performance of anomaly detection.

\begin{table}[!t]
\centering
\caption{AUC scores of seq2GMM for different segments of two lead ECG data}\label{tab:ablation1}
\scalebox{0.9}{
\begin{tabular}{@{}cc@{}}
\toprule
\# of   segments & AUC            \\ \midrule
w.o.              & 71.62          \\
2                & 78.70          \\
3                & {\ul 86.27}    \\
4                & \textbf{93.05} \\ \bottomrule
\end{tabular}}
\end{table}

\subsection{Scalability Analysis}

Finally, we theoretically analyze the computational and space complexity of the proposed machine learning model to support scalability analysis.

Let $T$ and $D$ represent the length and dimension of time series, respectively. $D_L$, $D_E$ and $K$ denote respectively the dimensions of latent space, the number of neurons in middle layer of estimation netowork, and the number of Gaussian mixture components. 
The computational complexity of seq2GMM is $O(T D C(D_L) + T C(D_L) + C(D_L, D_E, K))$, and the space complexity is $O(D C(D_L) + D T + C (D_L, D_E, K))$, where $C(\cdot)$ represents a constant controlled by the network hyperparameters.

It is evident that both the computational complexity and space complexity of seq2GMM will scale only linearly with $D$ and $T$. Please note that in practice, a variety of RNN acceleration algorithms \cite{dong2020rtmobile,wang2018c} can also be employed to further reduce the computational complexity of the seq2GMM.
In addition, when the time series is segmented, according to the aforementioned analysis, the complexity will decrease with the length of the temporal segment.

\section{Conclusion}\label{section:6}
We have developed an unsupervised sequence to GMM learning framework for anomaly detection in quasi-periodic time series with timing errors. We have also developed a surrogate-based optimization algorithm and a data augmentation method to train this deep learning model. The proposed method can also be used to pinpoint the anomaly shapelet.
Extensive experiments have been conducted to showcase the superior performance of the proposed seq2GMM framework and underscore the significant benefits of combining the deep learning architecture with a Gaussian mixture model for detecting anomalies in quasi-periodic time series.




\bibliographystyle{IEEEtran}
\bibliography{IEEEabrv,ref}


\begin{IEEEbiography}[{\includegraphics[width=1in,height=1.25in,clip,keepaspectratio]{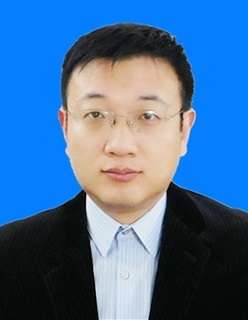}}]
 {Kai Yang} (SM'18) received the B.Eng. degree from Southeast University, Nanjing, China, the M.S. degree from the National University of Singapore, Singapore, and the Ph.D. degree from Columbia
University, New York, NY, USA.

He is a Distinguished Professor with Tongji University, Shanghai, China. He was a Technical
Staff Member with Bell Laboratories, Murray Hill, NJ, USA. He has also been an Adjunct Faculty
Member with Columbia University since 2011. He holds over 20 patents and has been published extensively in leading IEEE journals and conferences. His current research interests include big data analytics, machine learning, wireless communications, and signal processing.
\end{IEEEbiography}

\begin{IEEEbiography}[{\includegraphics[width=1in,height=1.25in,clip,keepaspectratio]{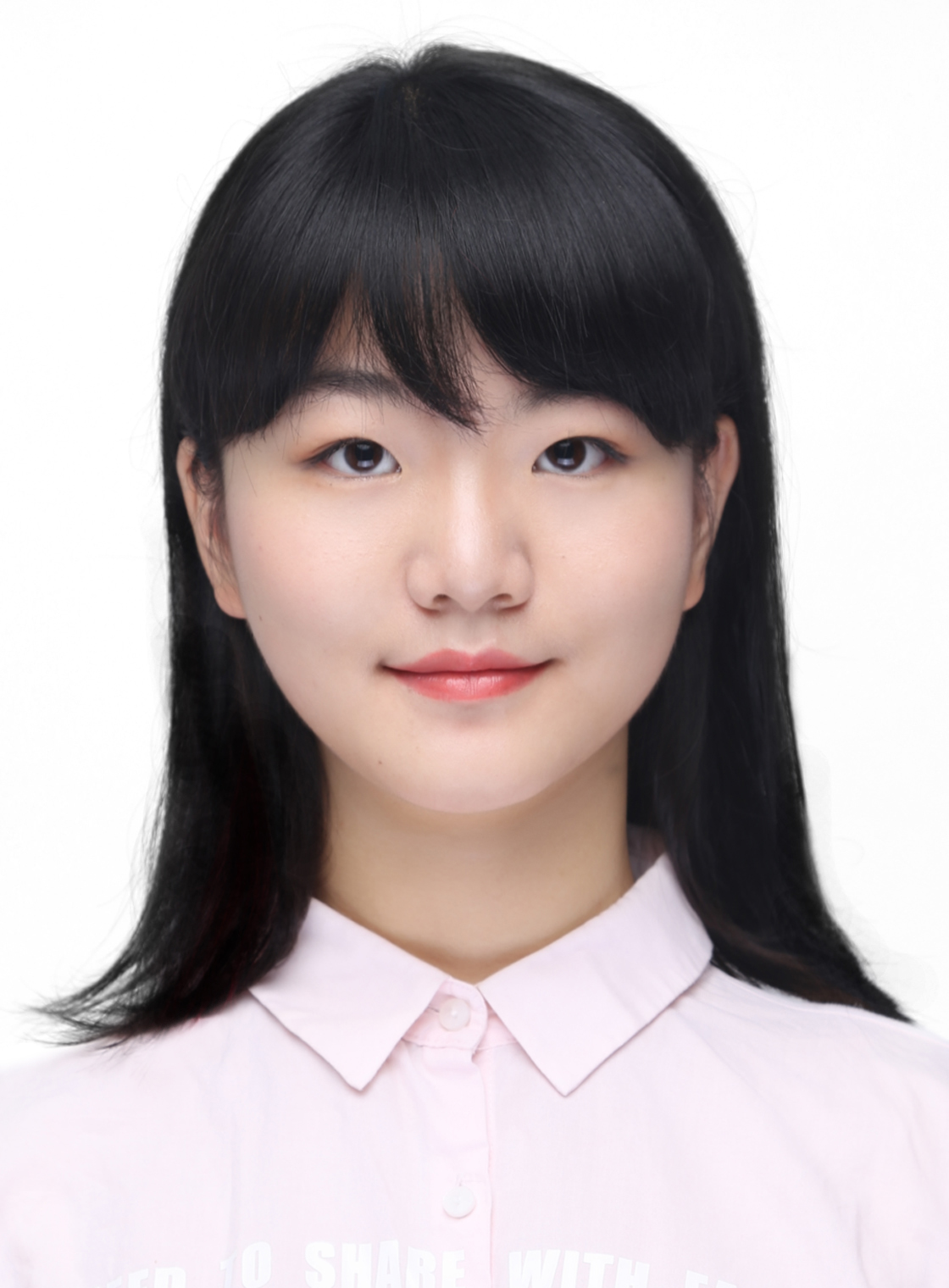}}]
{Shaoyu Dou} was born in Gansu, China, in 1996. She received
a B.Eng. degree from Hohai University, Nanjing, China, in 2018. She is currently pursuing a Ph.D. degree in computer science in the Department of Computer Science at Tongji University, Shanghai, China. Her major research interests include big data analytics and machine learning.
\end{IEEEbiography}

\begin{IEEEbiography}[{\includegraphics[width=1in,height=1.25in,clip,keepaspectratio]{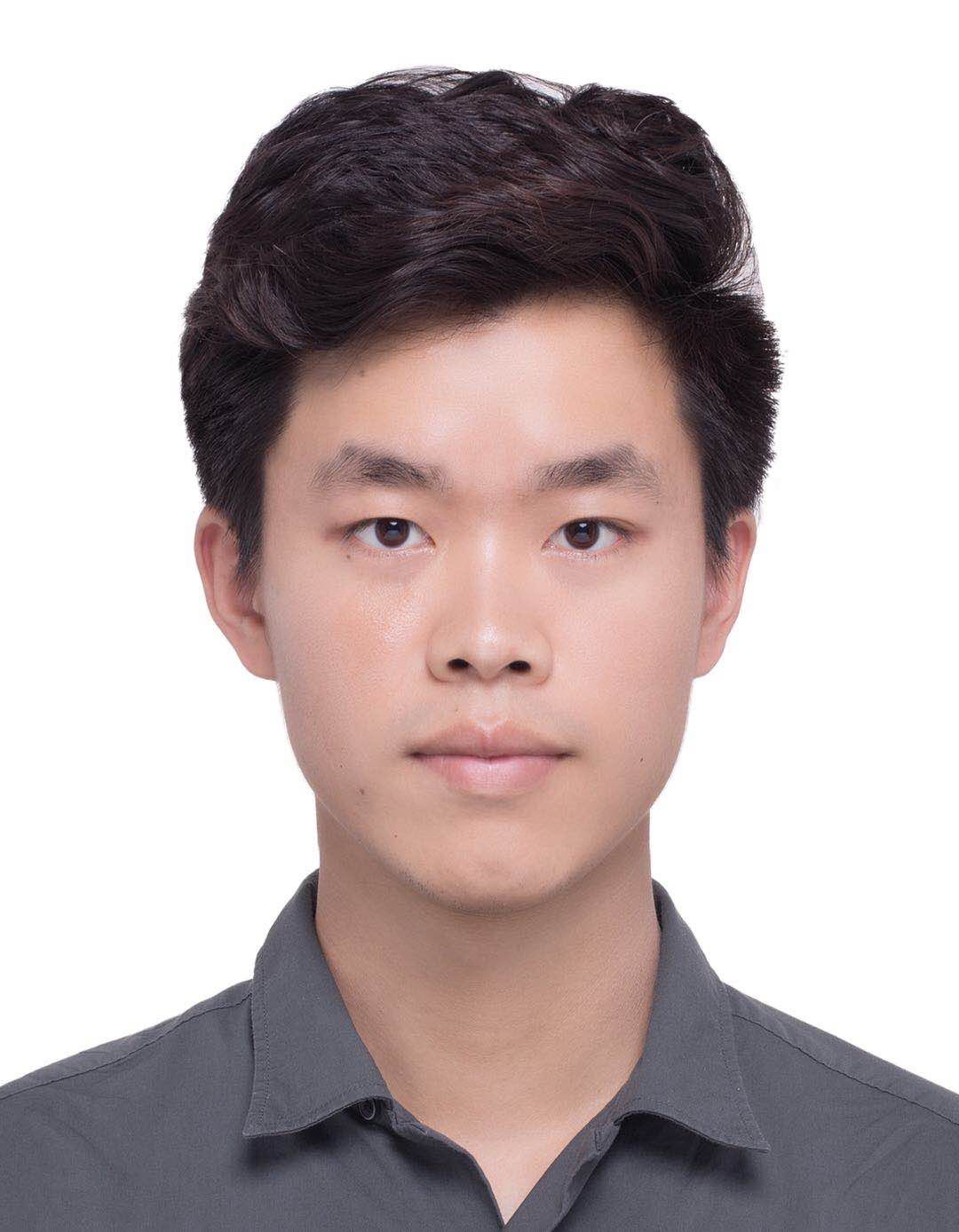}}]
{Pan Luo} was born in Sichuan, China, in 1995.
He received the B.Eng. degree from Tongji University,
Shanghai, China, in 2018. He continued to pursue his Master degree in computer science with the Department of Computer Science, Tongji University and successfully graduated in 2021. His  research focuses on data mining and machine learning.
\end{IEEEbiography}

\begin{IEEEbiography}[{\includegraphics[width=1in,height=1.25in,clip,keepaspectratio]{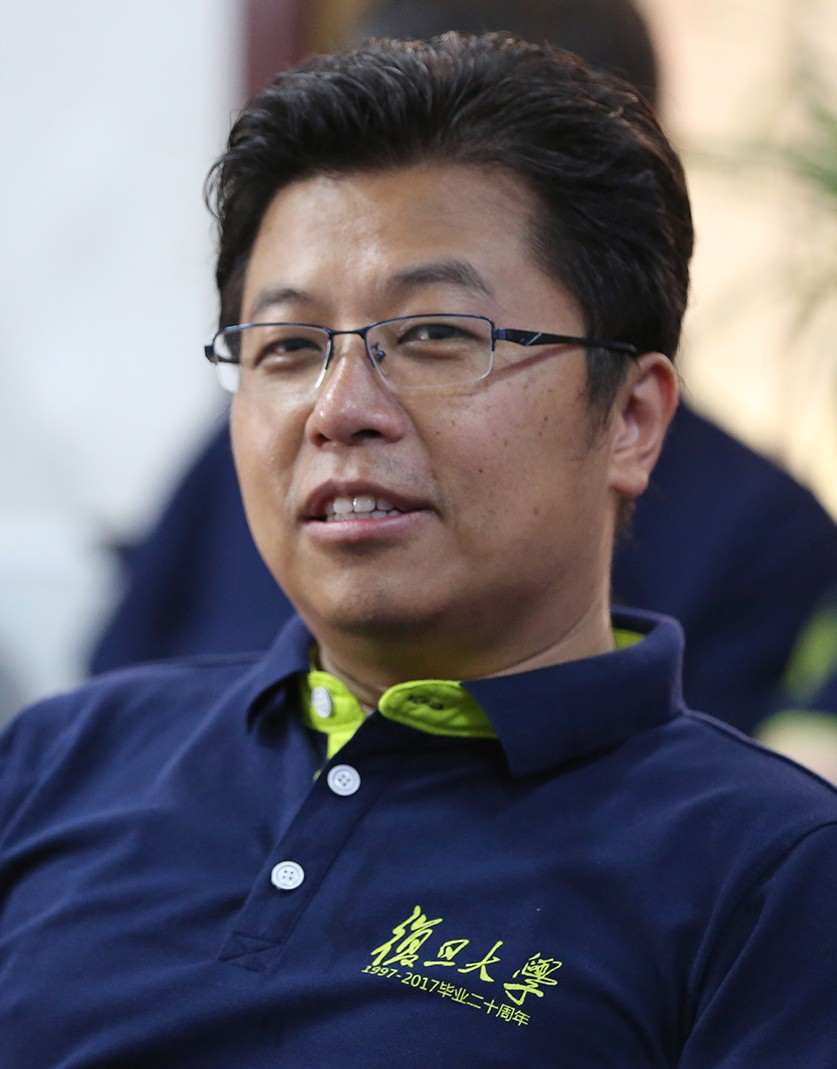}}]
{Xin Wang} (SM'09) received the B.Sc. and M.Sc.degrees from Fudan University, Shanghai, China, in 1997 and 2000, respectively, and the Ph.D. degree from Auburn University, Auburn, AL, USA, in 2004, all in electrical engineering.

From September 2004 to August 2006, he was a Postdoctoral Research Associate with the Department of Electrical and Computer Engineering, University of Minnesota, Minneapolis. In August 2006, he joined the Department of Electrical Engineering, Florida Atlantic University, Boca Raton, FL, USA, as an Assistant Professor, then was promoted to a tenured Associate Professor in 2010. He is currently a Distinguished Professor and the Chair of the Department of Communication Science and Engineering, Fudan University, China. His research interests include stochastic network optimization, energy-efficient communications, cross-layer design, and signal processing for communications. He is a Senior Area Editor for the IEEE Transactions on Signal Processing and an Editor for the IEEE Transactions on Wireless Communications, and in the past served as an Associate Editor for the IEEE Transactions on Signal Processing, as an Editor for the IEEE Transactions on Vehicular Technology, and as an Associate Editor for the IEEE Signal Processing Letters. He is a member of the Signal Processing for Communications and Networking Technical Committee of IEEE Signal Processing Society, and a Distinguished Lecturer of the IEEE Vehicular Technology Society.
\end{IEEEbiography}

\begin{IEEEbiography}[{\includegraphics[width=1in,height=1.25in,clip,keepaspectratio]{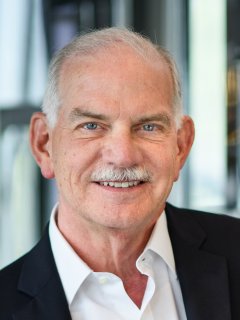}}]
{H. Vincent Poor} (S'72, M'77, SM'82, F'87) received the Ph.D. degree in EECS from Princeton University in 1977.  From 1977 until 1990, he was on the faculty of the University of Illinois at Urbana-Champaign. Since 1990 he has been on the faculty at Princeton, where he is currently the Michael Henry Strater University Professor. During 2006 to 2016, he served as the dean of Princeton’s School of Engineering and Applied Science. He has also held visiting appointments at several other universities, including most recently at Berkeley and Cambridge. His research interests are in the areas of information theory, machine learning and network science, and their applications in wireless networks, energy systems and related fields. Among his publications in these areas is the forthcoming book Machine Learning and Wireless Communications.  (Cambridge University Press). Dr. Poor is a member of the National Academy of Engineering and the National Academy of Sciences and is a foreign member of the Chinese Academy of Sciences, the Royal Society, and other national and international academies. He received the IEEE Alexander Graham Bell Medal in 2017.
\end{IEEEbiography}







\end{spacing}
\end{document}